\newtheorem{myDef}{Definition}
\newtheorem{myTheo}{Theorem}
\begin{document}
%
\title{Privacy-Aware Joint DNN Model Deployment and Partitioning Optimization for Collaborative Edge Inference Services}
%
%
%

\author{Zhipeng Cheng, Xiaoyu Xia,~\IEEEmembership{Senior Member,~IEEE,} Hong Wang, Minghui Liwang,~\IEEEmembership{Senior Member,~IEEE,} Ning Chen, Xuwei Fan, and Xianbin Wang,~\IEEEmembership{Fellow,~IEEE}

\thanks{Zhipeng Cheng (chengzp\_x@163.com) and Hong Wang (wh\_5233@163.com) are with School of Future Science and Engineering, Soochow University, Suzhou 215006, China. Xiaoyu Xia (xiaoyu.xia@rmit.edu.au) is with the School of Computing Technologies, RMIT University, Melbourne, Victoria, Australia. Minghui Liwang (minghuiliwang@tongji.edu.cn) is with the Department of Control Science and Engineering,
 The National Key Laboratory of Autonomous Intelligent Unmanned Systems, Tongji University, Shanghai 201804, China, and also
with the Frontiers Science Center for Intelligent Autonomous Systems, Ministry of Education, Tongji University, Shanghai 201804, China. Ning Chen (chenning@upc.edu.cn) is with Department of automation, China University of Petroleum (East China), Qingdao, China. Xuwei Fan (xwfan@fafu.edu.cn) is with College of Computer and Information Sciences, Fujian Agriculture and Forestry University, Fuzhou 350002, China.   Xianbin Wang (xianbin.wang@uwo.ca) is with Department of Electrical and Computer Engineering, Western University, Ontario, Canada. The Corresponding author is Minghui Liwang.}
}

%
%

\markboth{}%
{Shell \MakeLowercase{\textit{et al.}}:}
%



\IEEEtitleabstractindextext{%
\begin{abstract}
\justifying
Edge inference (EI) has emerged as a promising paradigm to address the growing limitations of cloud-based Deep Neural Network (DNN) inference services, such as high response latency, limited scalability, and severe data privacy exposure. However, deploying DNN models on resource-constrained edge devices introduces additional challenges, including limited computation/storage resources, dynamic service demands, and heightened privacy risks. To tackle these issues, this paper presents a novel privacy-aware optimization framework that jointly addresses DNN model deployment, user-server association, and model partitioning, with the goal of minimizing long-term average inference delay under resource and privacy constraints. The problem is formulated as a complex, NP-hard stochastic optimization. To efficiently handle system dynamics and computational complexity, we employ a Lyapunov-based approach to transform the long-term objective into tractable per-slot decisions. Furthermore, we introduce a coalition formation game to enable adaptive user-server association and design a greedy algorithm for model deployment within each coalition. Extensive simulations demonstrate that the proposed algorithm significantly reduces inference delay and consistently satisfies privacy constraints, outperforming state-of-the-art baselines across diverse scenarios.
\end{abstract}

\begin{IEEEkeywords}
Collaborative edge inference, Model deployment, Model partitioning,  Lyapunov optimization, Coalition formation game.
\end{IEEEkeywords}}

\maketitle

\IEEEdisplaynontitleabstractindextext

%
\IEEEpeerreviewmaketitle

\IEEEraisesectionheading{\section{Introduction}\label{sec:introduction}}

\IEEEPARstart{N}{owadays}, numerous applications based on Deep Neural Networks (DNNs) rely on remote cloud servers to perform complex inference services~\cite{c1}. However, this approach introduces several challenges, including delayed responses due to long-distance network transmissions, scalability issues arising from high bandwidth consumption, and privacy concerns related to the forwarding and storage of user data over the network~\cite{c2}. Furthermore, the growing disparity between the limited resources on mobile devices (MDs) and the computational demands of inference services presents another significant hurdle~\cite{c4}. As DNN models increase in size and complexity, both storage and computational requirements have escalated, making local processing increasingly difficult~\cite{c6}. For example, the VGG19 model, with 143 million parameters and a storage requirement of about 600 MB, poses substantial obstacles for execution on resource-constrained MDs~\cite{c7}.

To address these challenges, edge inference (EI) has emerged as a promising solution, enabling low-latency decision-making by deploying DNN inference services at the edge~\cite{c8,c9,c10,c11}. This approach involves offloading user requests to edge servers, which collaborate with MDs to complete inference tasks. By partitioning DNN models and distributing parts of the computation across both edge servers and MDs, collaborative EI has the potential to reduce inference delay and energy consumption, alleviate privacy concerns, and capitalize on the computational power of edge servers~\cite{c12,c13}.

Despite its promising potential, collaborative EI faces two major challenges:
\begin{itemize}
  \item \textbf{Resource Constraints}: One of the primary obstacles is the limited storage and computational resources available on MDs and edge servers, especially in the context of dynamic and diverse user requests~\cite{c14}. Optimizing model deployment and user-server association among MDs and edge servers is crucial to overcoming these resource limitations. Model deployment determines how DNN models are stored and accessed on edge servers to reduce inference delay and improve service quality~\cite{c15}. An effective deployment strategy optimizes model caching, reduces download and transmission times, and dynamically allocates resources based on request probability and computational requirements, thereby enhancing overall system responsiveness. Similarly, user association strategies aim to assign user requests to the most suitable edge servers, balancing system load and maximizing resource utilization while minimizing communication overhead. However, the limited resources of edge servers and the diversity of user requests make this optimization complex, requiring careful coordination of both model deployment and user association to fully leverage available resources and provide efficient, delay-minimized inference services.
  \item \textbf{Privacy-Communication-Computation Trilemma}: Collaborative inference introduces a fundamental trilemma among privacy, communication, and computation. Specifically, model partitioning directly impacts the computational burden on MDs and edge servers, as well as the communication overhead between them~\cite{c16,c17}. While partitioning helps reduce the need for MDs to transmit raw data—since only processed feature maps from early layers are sent—it still exposes the system to privacy risks. These feature maps are susceptible to model inversion attacks, which may lead to privacy breaches and the compromise of sensitive data~\cite{c18,c19}. To highlight this, our experiments (see Figs. \ref{fig1} and \ref{fig2}) measure privacy leakage during inference using data similarity metrics such as the Structural Similarity Index Measure (SSIM), Peak Signal-to-Noise Ratio (PSNR), and Learned Perceptual Image Patch Similarity (LPIPS)~\cite{c20}. These metrics illustrate the extent of privacy leakage under potential inversion attacks. This creates a fundamental trade-off: deeper local computation enhances privacy (lower SSIM), while shallower partitions reduce the local computational burden but increase both communication overhead and privacy risks. As a result, an effective model partition strategy must not only optimize computational and communication efficiency, but also carefully manage privacy concerns.
\end{itemize}

\begin{figure}[t]
    \centering
    \subfigure[] {\includegraphics[width=1.1in,angle=0]{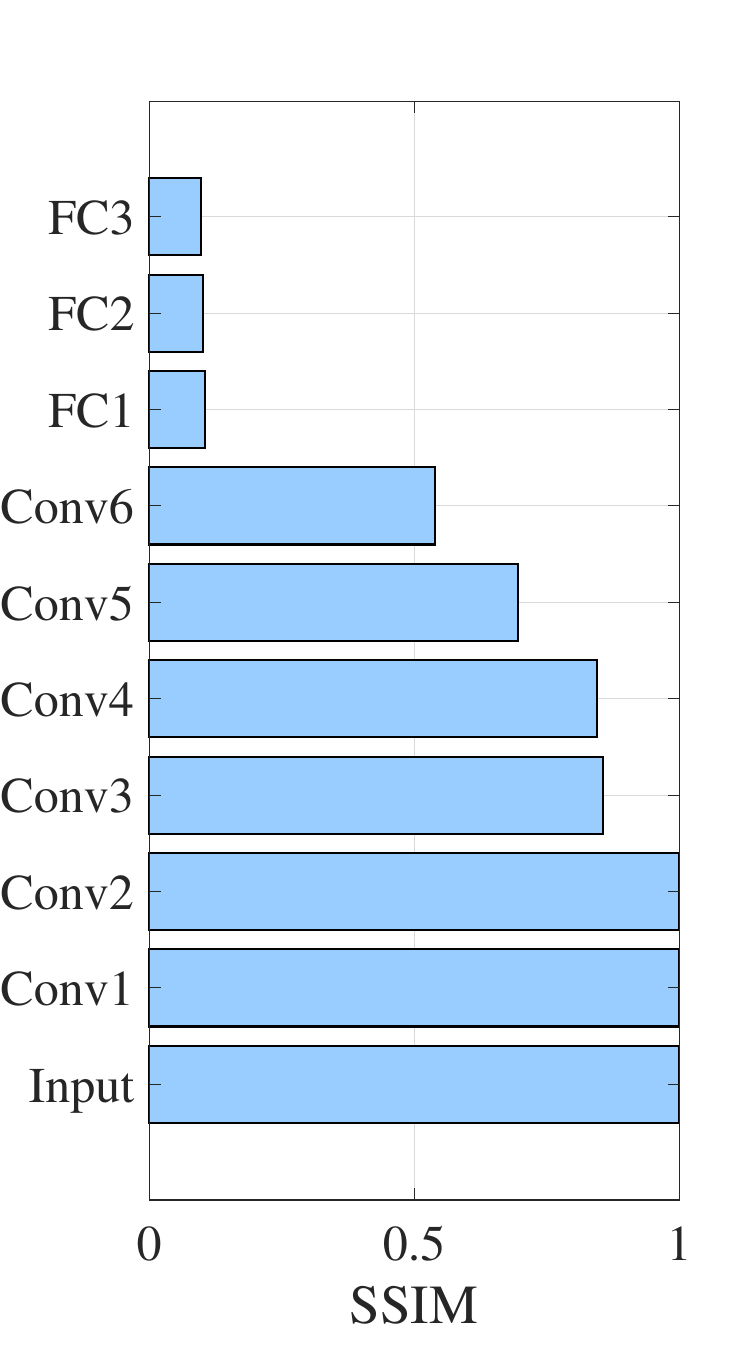}}
    \subfigure[] {\includegraphics[width=1.1in,angle=0]{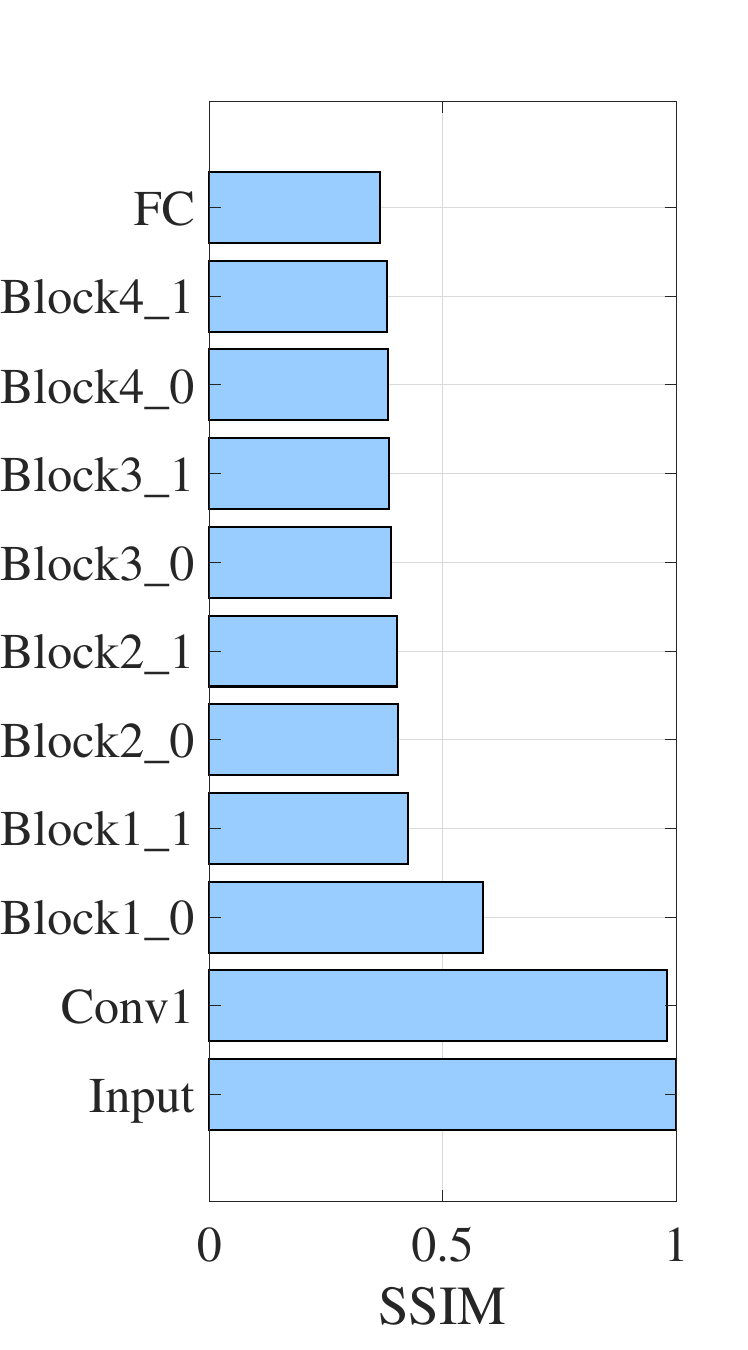}}
    \subfigure[] {\includegraphics[width=1.1in,angle=0]{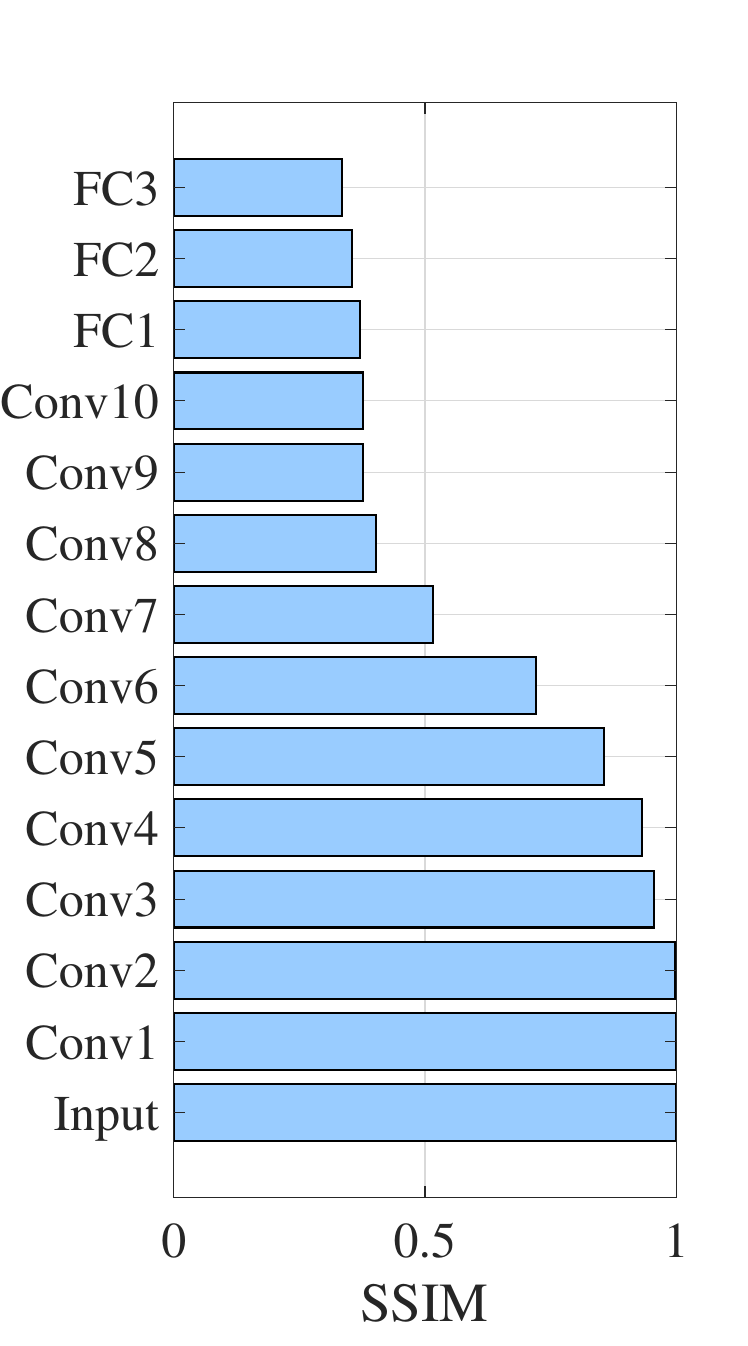}}
   \caption{Evaluation of the SSIM of images reproduced from the intermediate feature maps leaked from different layers of different models and data sets: (a) LeNet12 on CIFAR-10; (b) ResNet18 on CIFAR-100; (c) VGG13 on Caltech-101.}
    \label{fig1}
\end{figure}
\begin{figure}[t]
	\centering
	\includegraphics[width=3.35in]{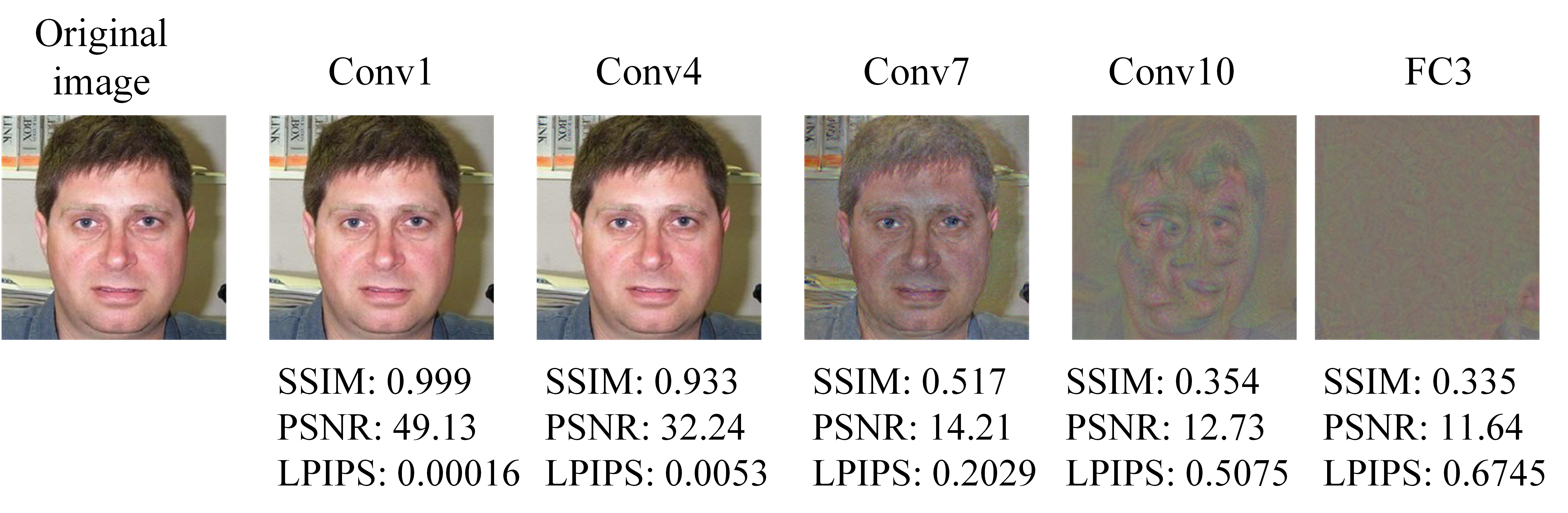}
	\caption{Illustration of reproduced image and different metrics after different layers of VGG13.}
	\label{fig2}
\end{figure}

Building on these insights, this paper aims to strike a balance between minimizing inference delay and safeguarding user privacy, thereby enhancing the overall performance of EI. Specifically, given an edge network composed of multiple inference services based on various DNN models, edge servers, and MDs, we propose a joint optimization approach that integrates model deployment, user association, and model partitioning, aiming to reduce long-term average inference delay while adhering to resource limitations and privacy constraints. Notably, while previous research has addressed model deployment, model partitioning, or user association individually, to the best of our knowledge, no existing work has jointly optimized all three elements to minimize inference delay under privacy and resource constraints.

The key contributions of this paper are as follows:
\begin{itemize}
  \item We formulate a novel joint optimization problem for multi-edge environments, aiming to minimize the long-term average inference delay while respecting the resource limitations of edge servers and privacy constraints. We further prove that this problem is NP-hard, highlighting its complexity and the necessity for suboptimal solutions.
  \item We propose a Lyapunov-based approach that transforms the long-term optimization problem into a single-time-slot problem, enabling effective handling of dynamic inference requests. Additionally, we introduce a coalition formation game model for user-server association, decoupling the complex joint optimization into manageable subproblems and enabling efficient suboptimal solutions via iterative optimization.
  \item We present a greedy algorithm for model deployment within each coalition, leveraging submodularity to provide efficient suboptimal solutions. We also develop an exhaustive search method to determine the optimal model partitioning strategy for each MD.
  \item We conduct extensive simulations to evaluate the performance of the proposed algorithms. Simulation results demonstrate that, compared to baseline algorithms, the proposed algorithms effectively minimize the average inference delay while satisfying long-term privacy constraints.
\end{itemize}

The remainder of this paper is organized as follows. Section 2 reviews related work on collaborative edge inference, model deployment, and privacy-aware DNN partitioning. Section 3 presents the system model, including the architecture of the edge-end collaborative inference framework, and formulates the long-term joint optimization problem aimed at minimizing average inference delay under resource and privacy constraints, and proves its NP-hardness. Section 4 presents the Lyapunov-based problem transformation, details the coalition formation game model, and describes the proposed algorithms for solving the optimization problem. Section 5 details the performance evaluation and discusses the simulation results. Finally, Section 6 concludes the paper and outlines potential directions for future work.

\section{Related Work}
Extensive research has been devoted to partition-based collaborative EI, wherein DNN models are strategically divided between edge servers and end-user devices to optimize computational load distribution and minimize communication overhead. For example, foundational works such as \cite{c21,c22} have established methodologies for model partitioning and inference optimization, while recent systematic surveys~\cite{c23,c24,c25} provide comprehensive overviews of foundational EI research. Aligned with the focus of this work, we review two critical dimensions of edge-end collaborative inference: 1) model deployment, placement, or caching strategies for efficient DNN inference, and 2) privacy-aware mechanisms to mitigate data leakage risks during collaborative inference. Representative works in these areas are discussed below.

\subsection{Model Deployment for Efficient DNN Inference}
Recent studies have made significant progress in optimizing DNN deployment across edge-cloud environments. In~\cite{c15}, a Lyapunov-based joint resource management scheme for IIoT systems is developed, co-optimizing model deployment, task offloading, and resource allocation to minimize delay and error penalties while ensuring system stability. JointDNN, proposed in~\cite{c26}, is a collaborative inference engine that employs shortest path optimization and integer linear programming to enhance performance and energy efficiency between mobile devices and cloud servers. For large-scale deployments,~\cite{c27} introduces a multi-paradigm deployment model that utilizes heuristic algorithms to optimize accuracy, scale, and cost for deep learning inference services. Resource-constrained scenarios are addressed in~\cite{c28}, which focuses on neural network optimization for edge devices by leveraging logic circuits, in-memory computing, and learning automata algorithms to improve efficiency and accuracy.

In the context of edge server deployments,~\cite{c29} establishes a joint caching and inference framework for foundation models, utilizing a least context algorithm to enhance inference accuracy and reduce system costs. Automation in cloud deployment is further advanced by the AutoDeep framework proposed in~\cite{c30}, which integrates Bayesian optimization and deep reinforcement learning to optimize cloud configurations and device placement, achieving significant cost reductions through techniques such as probing-informed block multiplexing. For heterogeneous edge environments, a joint optimization framework is designed in~\cite{c31} for device placement and model partitioning, employing evolutionary algorithms and dynamic programming to maximize throughput and minimize inference time.

Dynamic co-inference scenarios are further explored in~\cite{c32}, where model placement and online splitting in wireless networks are formulated as an optimal stopping problem, with algorithms proposed to minimize energy and time costs. In~\cite{c33}, multi-task inference in vehicular edge computing is addressed through a share-aware joint deployment and task offloading framework, utilizing a time period-aware algorithm to reduce total response time. Finally,~\cite{c34} investigates energy-optimal DNN placement in UAV-enabled edge networks, developing a Lyapunov-based online algorithm to minimize transmission delay and energy costs while stabilizing data queues.

In contrast to these studies that predominantly focus on isolated aspects of performance optimization, our work holistically investigates privacy-aware model deployment, adaptive model partitioning, and user–edge server association under resource and privacy constraints, a tripartite challenge unaddressed in existing literature, where privacy considerations are often oversimplified or decoupled from system-level coordination.
\subsection{Privacy-Aware Collaborative DNN Inference}
Recent advancements in privacy-preserving deep learning inference for edge and IoT systems address critical challenges in distributed intelligence. DistPrivacy~\cite{c35} introduces a distributed feature map allocation methodology that enhances data privacy for IoT surveillance systems through strategic data partitioning. In~\cite{c36}, an adaptive DNN partition framework is developed to dynamically balance privacy and performance under varying network conditions. The coalition-based approach in~\cite{c37} optimizes IoT camera–edge node associations, simultaneously addressing privacy preservation, energy efficiency, and multi-view detection performance through game-theoretic optimization. ~\cite{c38} proposes a quality-aware framework integrating adaptive model splitting and device association to handle resource and data heterogeneity in edge environments.

In device–edge co-inference scenarios,~\cite{c39} designs a Deep Reinforcement Learning (DRL)-based model splitting mechanism to achieve optimal privacy–computation trade-offs. Extending distributed inference capabilities,~\cite{c40} presents RL-DistPrivacy, employing reinforcement learning to optimize privacy-aware deep inference in latency-sensitive IoT systems.

Recent innovations further enhance both privacy and reliability:~\cite{c41} develops a queue-optimized CNN distributed inference method that leverages reinforcement learning to minimize latency while ensuring customer privacy protection. Finally,~\cite{c42} introduces Salted DNNs, a novel semantic rearrangement technique that uses cryptographic salt to enhance output privacy in split inference without compromising computational efficiency. Collectively, these works significantly advance the state-of-the-art in privacy-aware edge intelligence; however, they predominantly focus on isolated aspects of the privacy–performance trade-off, leaving opportunities for more holistic frameworks that integrate adaptive model deployment optimization, user–server association, and comprehensive privacy preservation mechanisms.

\section{System model and problem formulation}

\subsection{System Overview}
As shown in Fig. \ref{fig3}, we consider a hierarchical edge network comprising a cloud server, $M$ edge servers $\mathcal{M}=\{1,2,\dots,M\}$ (co-located with base stations), and $N$ resource‑constrained MDs $\mathcal{N}=\{1,2,\dots,N\}$. The cloud server maintains a library of $L$ distinct pre-trained DNN models $\mathcal{L}=\{1,2,\dots,L\}$. Each model $l \in \mathcal{L}$ provides a specific inference service to MDs, such as image classification, human motion detection, or autonomous driving. Each model $l$ requires storage resource $D_{l}$ for deployment and incurs a unit computation workload $W_{l}$ for model execution, measured in Floating Point Operations (FLOPs). Each edge server $m \in \mathcal{M}$ is equipped with limited computation resource $F_{m}$ (measured in Floating Point Operations Per Second, FLOPS) , storage resource $C_{m}$ (in GB), and spectrum bandwidth $B_m$ (in MHz). Each MD $n \in \mathcal{N}$ also has limited computation resource $F_{n}$. The edge servers are connected to the cloud server via the wired backhaul links, and communicate with MDs through wireless links. System time is slotted into discrete intervals $\mathcal{T}$, and the system is stable within each time slot.

At each time slot $t$, each MD $n \in \mathcal{N}$ generates an inference service request $r_{n}(t)$, which can be described by a tuple $\{p_{l,n}(t), d_{n,l}(t)\}$. $p_{l,n}(t)$ represents the probability of MD $n$ requesting model $l$ at time slot $t$, which is assumed to follow a known probability distribution, e.g., Zipf-like distribution. $d_{n,l}(t)$ is the input data size (i.e., number of images) for inference service $l$, and we assume that $d_{n,l}(t)$ follows a general random distribution within the range of $[d_{n,l}^{min}, d_{n,l}^{max}]$. This assumption is commonly adopted in prior works on edge inference scheduling\cite{c_RequestTSC,c_RequestInfocom}, where each MD engages in a single dominant inference task during a short scheduling interval.

Consequently, the proposed framework integrates several tightly coupled components to enable efficient and privacy-aware collaborative edge inference: (\textit{i}) model deployment at edge servers, (\textit{ii}) user–server association, and (\textit{iii}) model partitioning between MDs and edge servers. The workflow unfolds as follows. At the beginning of each time slot $t$, each MD generates an inference request, which is submitted to its associated edge server according to the current user–server association strategy. To minimize inference delay, edge servers pre-download and deploy models from the cloud server based on an optimized model deployment strategy, ensuring that popular and high-demand models are locally available. Upon receiving a request, the edge server determines the model partitioning strategy, splitting the DNN model into two sub-models: the device-side model is executed on the MD, while the server-side model is processed by the edge server. This design allows collaborative inference while balancing privacy, communication, and computation costs. The subsequent subsections detail the system modeling and optimization formulations that underpin this workflow.

\begin{figure}[t]
	\centering
	\includegraphics[width=2.5in]{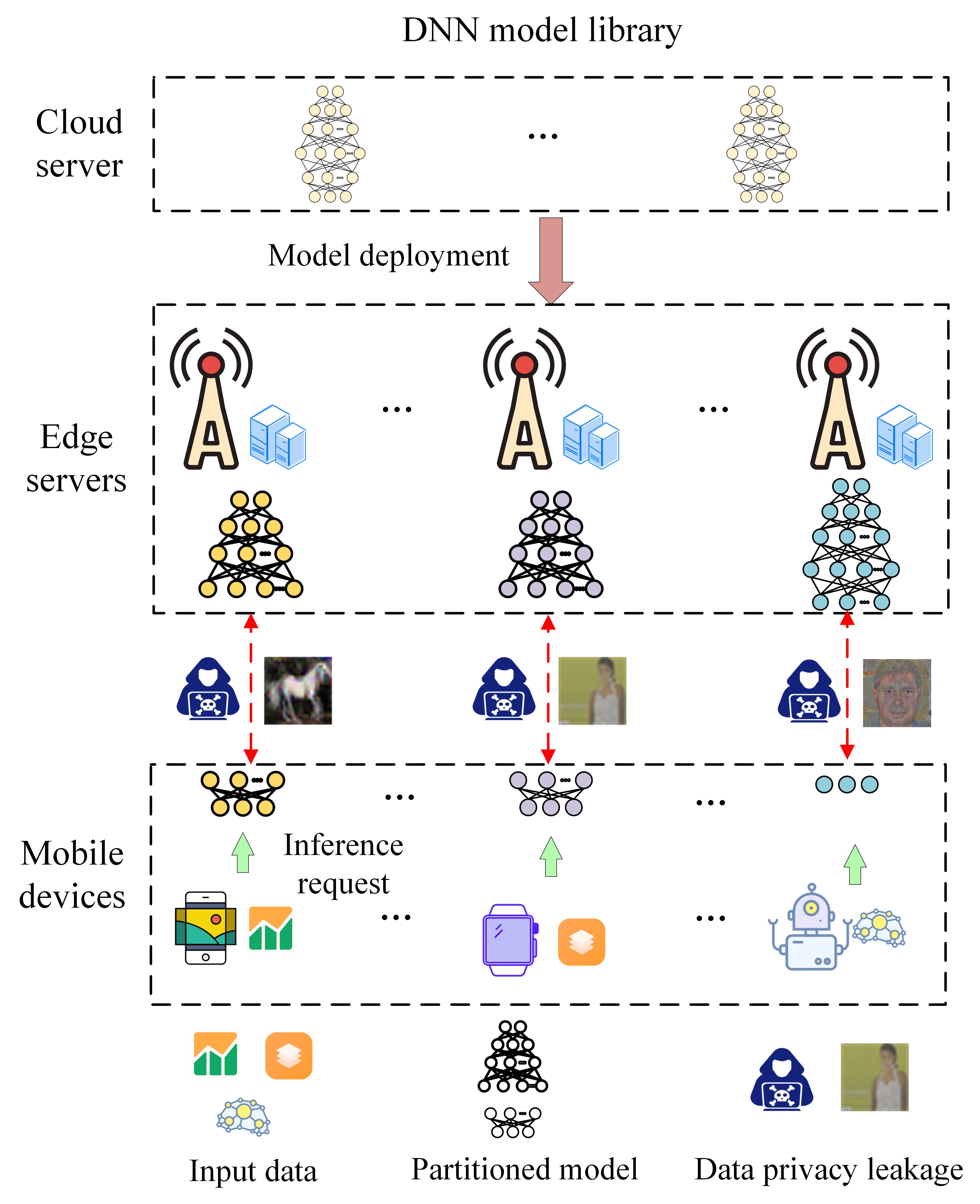}
	\caption{An illustration of edge-end collaborative EI in an edge computing network.}
	\label{fig3}
\end{figure}

\subsection{DNN Deployment and User Association Model}
Edge servers enable proximity-aware inference by locally deploying DNN models requested by MDs. However, limited storage prevents hosting the entire model library. Therefore, an effective deployment strategy is needed to decide which models to deploy on each server and when to update them based on changing demand. Let the binary deployment variable $x_{m,l}(t) \in \{0,1\},$ where $x_{m,l}(t) = 1$ if model $l$ is cached on edge server $m$ at time $t$, and $0$ otherwise.

Each edge server can store only a subset of models based on its storage capacity $C_m$, leading to the following constraint:
\begin{equation}
\text{C1:} \quad \sum_{l \in \mathcal{L}} x_{m,l}(t) D_l \leq C_m, \quad \forall m \in \mathcal{M},\ \forall t \in \mathcal{T}.
\end{equation}
Since servers may store different models and experience varying loads, user–server association critically affects performance. Each MD must connect to a server that both hosts the requested model and has sufficient computation and communication resources to minimize overall inference costs. We assume association decisions are made at the beginning of each time slot and remain fixed within the slot. This reduces synchronization overhead and ensures stable resource scheduling. To avoid the complexity of multi-server access, each MD can be associated with only one server per slot. Let $y_{m,n}(t) \in \{0,1\},$ where $y_{m,n}(t) = 1$ if MD $n$ is associated with edge server $m$ at time $t$, and $0$ otherwise.

The user-server association must satisfy the constraint:
\begin{equation}
\text{C2:} \quad \sum_{m \in \mathcal{M}} y_{m,n}(t) = 1, \quad \forall n \in \mathcal{N},\ \forall t \in \mathcal{T}.
\end{equation}

\subsection{DNN Partitioning Model}
In this work, we adopt a model partitioning strategy where each DNN model is split into a device-side and a server-side sub-model. The partition point affects both computation load on the MDs and servers, and the transmission volume between them. Let each DNN model $l \in \mathcal{L}$ consist of $K_l$ sequential layers. For a given partition point $z_{m,n,l}(t) \in \{0,1,\dots, K_l\}$, the first $z_{m,n,l}(t)$ layers are executed on the MD $n$, and the remaining $K_l-z_{m,n,l}(t)$ layers are processed by the associated edge server $m$. Additionally, two special cases are considered to enhance flexibility. If the MD has sufficient computational capability or strong privacy constraints, it can download the entire model and execute full local inference (i.e., $z_{m,n,l}(t)$=0). Conversely, if the MD has limited resources and lower privacy concerns, it may upload the entire input data to the edge server for fully remote inference (i.e., $z_{m,n,l}(t)=K_l$).

Accordingly, we define $w_{l}^{dev}(z_{m,n,l}(t))$ as the cumulative computational load of the first $z_{m,n,l}(t)$ layers, and let  $w_{l}^{edge}(z_{m,n,l}(t))$ denote the computational load of the remaining layers assigned to the edge server, i.e., $W_l=W_{l}^{dev}(z_{m,n,l}(t)+W_{l}^{edge}(z_{m,n,l}(t))$. Similarly, let $D^{\text{dev}}_l(z_{l,m,n}(t))$ denote the device-side sub-model size, and $D^{\text{edge}}_l(z_{l,m,n}(t))$ is the edge-side sub-model size, then we have $D_l=D^{\text{dev}}_l(z_{l,m,n}(t))+D^{\text{edge}}_l(z_{l,m,n}(t))$. Besides, $I_l(z_{m,n,l}(t))$ is the intermediate feature size output at the partition point, which must be uploaded to the edge server for continued processing.

\subsection{Privacy Loss Evaluation Model}
In collaborative inference systems employing model partitioning, privacy concerns arise because intermediate feature maps transmitted from user devices to edge servers may still retain substantial semantic and structural information. These intermediate representations can be exploited by adversaries to reconstruct the original input data, posing significant privacy risks. To quantify potential privacy leakage, various metrics have been used in prior works, such as PSNR, MSE, and SSIM as discussed in Introduction. Among these, SSIM has been widely adopted due to its strong correlation with human-perceived visual similarity, making it a practical tool to assess privacy leakage\cite{c37,c39}.

Since SSIM measures the similarity between the original input image and the reconstructed image derived from intermediate features, a higher SSIM value indicates that the reconstructed image closely resembles the original, suggesting a higher privacy risk. \textit{We acknowledge that SSIM has limitations in fully capturing semantic-level privacy risks. However, in this work, we choose to retain SSIM for the following reasons: First, the primary focus of this paper is not on advancing privacy evaluation theory or developing the most precise privacy metric. Instead, we focus on system-level optimization of model deployment, user–server association, and model partitioning under general privacy constraints; Second, the choice of specific privacy evaluation metrics does not fundamentally alter the optimization framework proposed in this work. Our framework is designed to be flexible and can incorporate alternative privacy metrics without loss of generality}.

SSIM quantifies image similarity by considering luminance, contrast, and structural information between two images $i$ and $j$. The original SSIM computation~\cite{wang2004SSIM} is defined as:
\begin{equation}
\text{SSIM}(i,j) = \frac{(2\mu_i \mu_j + C_1)(2\sigma_{ij} + C_2)}
{(\mu_i^2 + \mu_j^2 + C_1)(\sigma_i^2 + \sigma_j^2 + C_2)},
\end{equation}
where $\mu_i$ and $\mu_j$ are the means of images $i$ and $j$; $\sigma_i^2$ and $\sigma_j^2$ are their variances; $\sigma_{ij}$ is the covariance; and $C_1$, $C_2$ are small constants to stabilize the division. The theoretical range of SSIM is $[-1, 1]$, where 1 indicates perfect similarity and lower values indicate decreasing similarity.

For consistency in our privacy loss evaluation, we normalize the SSIM value to map it from $[-1,1]$ into $[0,1]$, using the transformation:
\begin{equation}
\phi(z_{m,n,l}(t)) = \frac{\text{SSIM}(z_{m,n,l}(t)) + 1}{2},
\end{equation}
Here, $\phi(z_{m,n,l}(t))$ denotes the privacy risk (or possibility) determined by model partition point $z_{m,n,l}(t)$.  $\text{SSIM}(z_{m,n,l}(t))$ is a shorthand notation indicating the SSIM computed between the original input image and the reconstructed image obtained by applying an adversarial reconstruction algorithm to the intermediate feature maps generated at the partition point $z_{m,n,l}(t)$. A higher $\phi(z_{m,n,l}(t))$ value implies a higher estimated privacy risk, i.e., a higher possibility of data privacy loss.

To simplify notation, we denote the estimated cumulative privacy loss by MD $n$ at time slot $t$ as $\Upsilon_n(t)$, which can be as \cite{c39}:
\begin{equation}
\Upsilon_n(t) = y_{m,n}(t) d_{n,l}(t)\ \phi(z_{m,n,l}(t))
\end{equation}

Based on the above discussion, it is necessary to conduct preliminary experiments on the dataset using the corresponding models to obtain empirical SSIM values. \textit{A key question is whether a correlation exists between the SSIM value and the partition point or model layer index}. If such a relationship can be approximated by a fitting function, it would enable efficient estimation of SSIM values at different partition points, thereby reducing the need for extensive pre-experiments. To explore this, we utilize a series of LeNet and VGG models, plotting scatter diagrams and fitting curves of SSIM values against the model layer index. The results are shown in Fig.~\ref{fig4}.

As illustrated in the figure, both the LeNet and VGG models exhibit similar trends: SSIM values decrease gradually with increasing model depth. Through curve fitting, these trends can be approximated by the following sigmoid-like function:
\begin{equation}\label{}
\phi(z_{m,n,l}(t)) = \frac{\omega_1}{1 + \exp\left(-\omega_2 (z_{m,n,l}(t) - \omega_3)\right)} + \omega_4
\end{equation}
where $\omega_1$, $\omega_2$, $\omega_3$, and $\omega_4$ are fitting constants. For example, we obtain $\omega_1=0.9031$, $\omega_2=-1.6683$, $\omega_3=4.9119$, $\omega_4=0.0983$ for LeNet models, and $\omega_1=0.6957$, $\omega_2=-0.6047$, $\omega_3=6.7718$, $\omega_4=0.3371$ for VGG models. While these parameter values may vary across different datasets, the general trend remains consistent.

It is worth noting that we were unable to achieve a good fit for ResNet models. This is because ResNet adopts a block-based architecture, and partitioning is performed at the block level, causing the SSIM values to drop sharply at certain points and then stabilize, as shown in Fig.~\ref{fig1}. Nevertheless, for other model families, if similar trends are observed through preliminary testing on a representative model on a specific data set, the fitted function can be applied to approximate the SSIM values of other variants in the same series. This approach can reduce the workload of pre-experimental SSIM evaluation to some extent.

\begin{figure}[t]
    \centering
    \subfigure[] {\includegraphics[width=1.71in,angle=0]{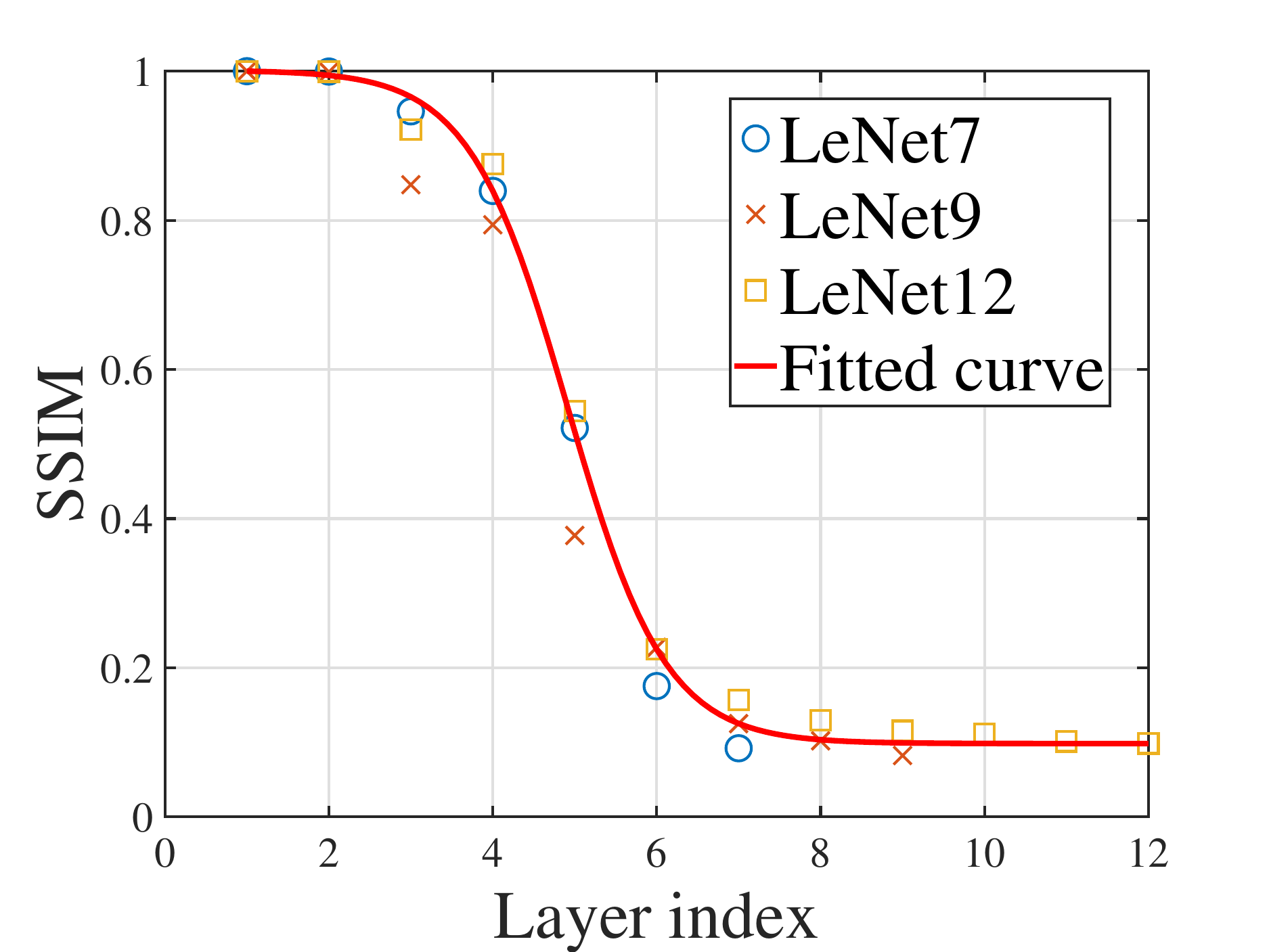}}
    \subfigure[] {\includegraphics[width=1.71in,angle=0]{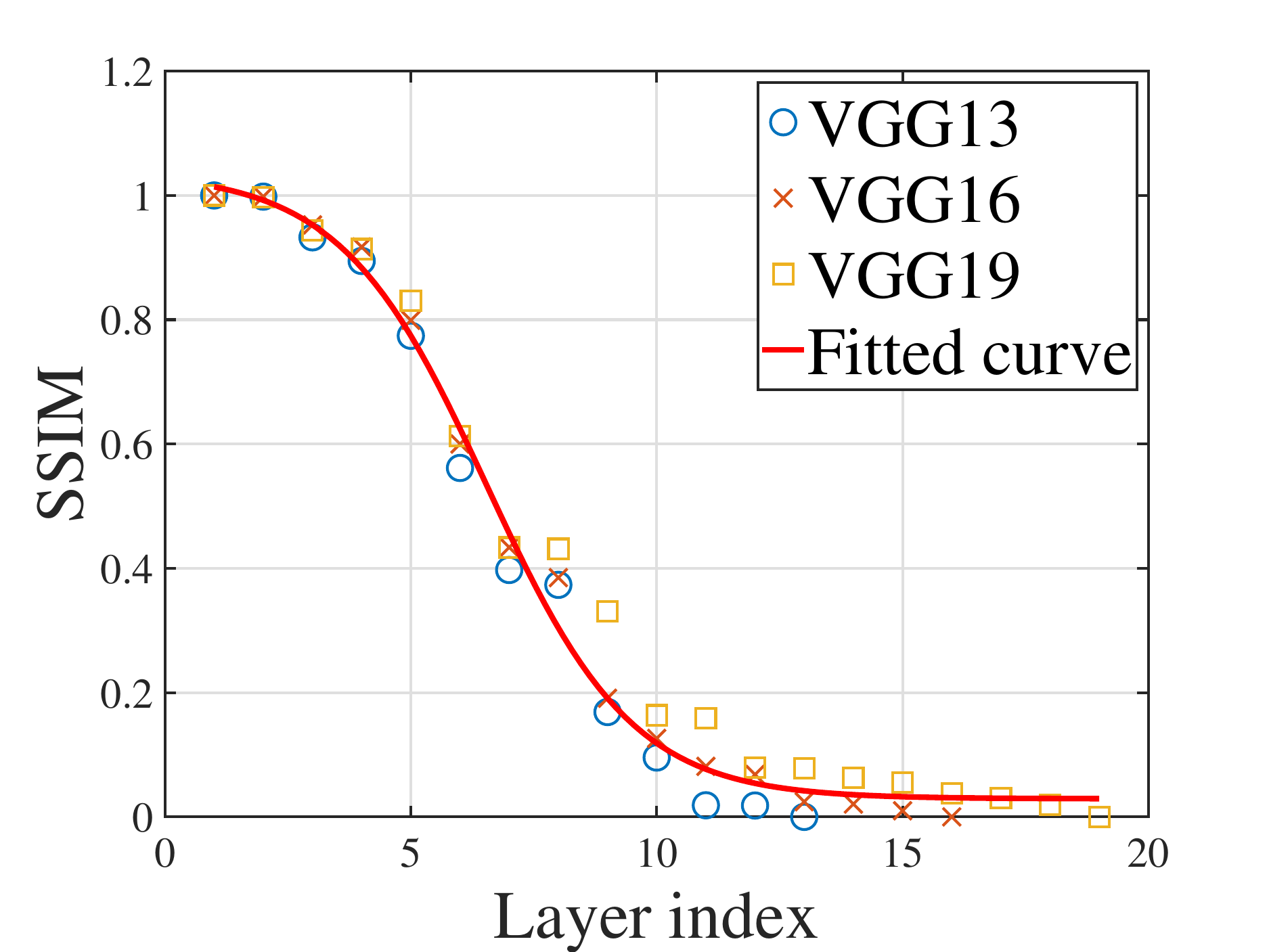}}
   \caption{The relationship between SSIM values and the model layer index: (a) LeNet models on CIFAR-10; (b) VGG models on Caltech-101.}
    \label{fig4}
\end{figure}

\subsection{Inference Delay Model}

The inference delay experienced by each MD depends jointly on the model deployment, user–server association, and model partitioning strategies. This section presents detailed modeling of the end-to-end delay across all stages of the collaborative inference process.

We assume that at the initial time slot $t=0$, no models are deployed at any edge server (i.e., $x_{m,l}(0)=0$ for all $l, m$). The full inference pipeline can be broken down as follows:

\textit{1) Cloud-to-Edge Model Transmission}: If the required model $l$ is not deployed on edge server $m$ at time $t-1$ but is deployed at time $t$ (i.e., $x_{m,l}(t)=1$ and $x_{m,l}(t-1)=0$), the server needs to download the full model from the cloud. The corresponding delay is:
\begin{equation}
\tau^{\text{c2e}}_{m,n,l}(t) = y_{m,n}(t) (1 - x_{m,l}(t-1)) x_{m,l}(t) \frac{D_l}{R_m^{c2e}},
\end{equation}
where $R_m^{c2e}$ is the cloud-to-edge backhaul transmission rate.

\textit{2) Device-side Sub-model Download}: Before local inference begins, MD $n$ needs to download the sub-model determined by the partition point $z_{l,m,n}(t)$ from its associated edge server. In this work, we assume the available bandwidth $B_{m}$ of each edge server is equally shared by its associated MDs. Thus, the bandwidth occupied by MD $n$ can be denoted as $b_{n,m}(t) = \frac{B_m}{N_m(t)}$, where $N_m(t) = \sum_{n=1}^{N} y_{m,n}(t) \neq 0$ is the total number of associated MDs.

Then, we adopt the Shannon capacity formula to compute the uplink and downlink data rates between MD $n$ and edge server $m$. Assuming symmetric bandwidth $b_{m,n}(t)$ and frequency-flat AWGN channels, the achievable transmission rates are expressed as follows:
\begin{equation}
R^{\text{up}}_{m,n}(t) = b_{m,n}(t)  \log_2\left(1 + \frac{P_n(t)h^{\text{up}}_{m,n}(t)}{N_0 b_{m,n}(t)}\right),
\end{equation}
\begin{equation}
R^{\text{down}}_{m,n}(t) = b_{m,n}(t) \log_2\left(1 + \frac{P_m(t) h^{\text{down}}_{m,n}(t)}{N_0 b_{m,n}(t)}\right),
\end{equation}
where $P_n(t)$ and $P_m(t)$ are the transmit power of MD $n$ and server $m$. $h^{\text{up}}_{m,n}(t)$ and $h^{\text{down}}_{m,n}(t)$ are the channel gains. $N_0$ is noise power spectral density.
Therefore, the corresponding download delay is:
\begin{equation}
\tau^{\text{down}}_{m,n,l}(t) = y_{m,n}(t) \frac{D^{\text{dev}}_l(z_{m,n,l}(t))}{R^{\text{down}}_{m,n}(t)}.
\end{equation}

\textit{3) Local Inference on MD}: After downloading completes, MD $n$ executes the device-side sub-model on its local input data. The local computation delay is:
\begin{equation}
\tau^{\text{local}}_{m,n,l}(t) =  y_{m,n}(t) d_{n,l}(t) \frac{W_l^{\text{dev}}(z_{m,n,l}(t))}{F_n}.
\end{equation}

\textit{4) Intermediate Feature Upload}: After local processing, the intermediate feature map generated by the partition point is uploaded to the edge server. The upload delay is:
\begin{equation}
\tau^{\text{up}}_{m,n,l}(t) = y_{m,n}(t) \frac{d_{n,l}(t)I_l(z_{l,m,n}(t))}{R_{m,n}^{\text{up}}(t)}.
\end{equation}

\textit{5) Inference on Edge}: Upon receiving the intermediate results, edge server $m$ continues processing the remaining layers of the model. The edge-side inference delay is:
\begin{equation}
\tau^{\text{edge}}_{m,n,l}(t) = y_{m,n}(t) \frac{d_{n,l}(t) W_l^{\text{edge}}(z_{m,n,l}(t))}{f_{n,m}(t)},
\end{equation}
where $f_{m,n}(t)=F_M/N_{m}(t)$ is the computation resource allocated to MD $n$ by server $m$ at time $t$.

Due to the typically small size of inference results, the transmission delay to return final results to MDs is considered negligible.

\textit{6) Total Inference Delay with Pipelined Execution}: Considering the possibility of pipelined execution, where MDs can upload intermediate features as soon as they are generated, and edge servers can begin processing received features immediately. To capture this fine-grained parallelism, we refine the total inference delay experienced by MD $n$ when requesting service $l$ with edge server $m$ at time $t$ as follows:
\begin{equation}
\begin{aligned}
\tau_{n}(t) =\
& \tau^{\text{c2e}}_{m,n,l}(t) + \tau^{\text{down}}_{m,n,l}(t) \\
& + \max \Big\{ \tau^{\text{local}}_{m,n,l}(t),\
   \max\big( \tau^{\text{up}}_{m,n,l}(t),\ \tau^{\text{edge}}_{m,n,l}(t) \big) \Big\},
\end{aligned}
\end{equation}
where $\tau_n(t)$ is used as a simplified notation for the total inference delay of MD $n$ when no ambiguity arises.

Finally, the total inference delay of all MDs can be denoted as:
\begin{equation}\label{}
\tau(t) = \sum_{n=1}^{N}\tau_{n}(t).
\end{equation}

\subsection{Problem Formulation and Hardness}
This paper aims to enhance the inference delay performance at the systemic level while simultaneously reducing the risk of inference data privacy leakage, thereby achieving a secure and efficient inference service framework. Furthermore, due to the stochastic and dynamic of inference service requests, optimizing any individual time slot alone cannot guarantee the long-term stability of the system. Therefore, this paper decides to minimize the long-term average inference delay under sustained privacy leakage constraints by jointly optimizing the model deployment strategy $\mathcal{X}(t)=\left\{x_{m,l}(t) \mid l \in \mathcal{L}, m \in \mathcal{M}\right\}$, association strategy $\mathcal{Y}(t)=\left\{y_{m,n}(t) \mid m \in \mathcal{M}, n \in \mathcal{N} \right\}$, and model partition strategy $\mathcal{Z}(t)=\left\{z_{m,n,l}(t) \mid l \in \mathcal{L}, m \in \mathcal{M}, n \in \mathcal{N} \right\}$. The formulated optimization problem can be expressed as follows:
\begin{equation}\label{p1}
\begin{aligned}
\mathcal{P}_1: & \min _{\mathcal{X}(t), \mathcal{Y}(t), \mathcal{Z}(t)}
\quad \lim_{T \rightarrow \infty} \frac{1}{T} \sum_{t=0}^{T-1} \mathbb{E}[\tau(t)] \\
\text{s.t.} \quad
& C1, C2, \\
& C3: x_{l,m}(t),\ y_{m,n}(t) \in \{0, 1\},\quad \forall l,m,n,t, \\
& C4: z_{l,m,n}(t) \in \{0, 1, \dots, K_l\},\quad \forall l,m,n,t, \\
& C5: \lim_{T \rightarrow \infty} \frac{1}{T} \sum_{t=0}^{T-1} \mathbb{E}[\Upsilon_n(t)] \leq \bar{\Upsilon}_n,\quad \forall n .
\end{aligned}
\end{equation}

In the above problem formulation, C5 ensures that the long-term average privacy loss for each MD does not exceed the specified threshold $\bar{\Upsilon}_n$. $\bar{\Upsilon}_n$ can reflect the MD's sensitivity to privacy loss.    The resulting problem $\mathcal{P}_1$ is a long-term stochastic optimization problem with combinatorial decision variables. It is challenging due to the joint optimization over discrete deployment and association variables, integer-valued partitioning strategies, and nonlinear effects introduced by the pipelined inference delay. In particular, even under a single-slot setting (ignoring C5 and time coupling), the problem remains NP-hard. This is because the impact of model partitioning on both latency and privacy cannot be captured by any simple convex or continuous mapping. We formalize this result in following Theorem 1 below.

\begin{myTheo}\label{theo1}
The proposed problem (\ref{p1}) is NP-hard in a single time slot.
\end{myTheo}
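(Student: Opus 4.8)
The plan is to establish NP-hardness by a polynomial-time reduction from a classical, strongly NP-complete packing problem. The structural cue is that, in a single slot, the storage bound C1, $\sum_{l\in\mathcal{L}} x_{m,l}(t)D_l\le C_m$, acts as a bin-capacity constraint on each server, while the delay model implicitly forces a \emph{serve-only-if-deployed} coupling: every stage of $\tau_n(t)$ (sub-model download, local inference, feature upload, edge inference) presupposes that the requested model physically resides on the associated server, so setting $y_{m,n}(t)=1$ for a request of model $l$ is realizable only if $x_{m,l}(t)=1$. Since C2 compels each MD to attach to exactly one server and there is no cloud-direct inference path for MDs, serving all MDs is possible iff every requested model can be deposited on some server without exceeding its storage, which is precisely a bin-packing feasibility question. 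I therefore plan to reduce from Bin Packing (and, for an objective-sensitive version, from the Multiple Knapsack Problem).

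Concretely, given a Bin Packing instance with items of sizes $a_1,\dots,a_N$ and $k$ identical bins of capacity $C$, I would build a single-slot instance of $\mathcal{P}_1$ with $M=k$ servers, each having $C_m=C$ and abundant (non-binding) bandwidth $B_m$ and compute $F_m$; with $N$ MDs, where MD $n$ requests a \emph{distinct} model $l_n$ of storage $D_{l_n}=a_n$; and with $x_{m,l}(t-1)=0$ and any fixed partition point (e.g.\ $z_{m,n,l}(t)=K_{l_n}$), which freezes the integer dimension $\mathcal{Z}(t)$ and the privacy term $\Upsilon_n(t)$ so that C5 and the inter-slot coupling are inert by hypothesis. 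Because the models are pairwise distinct, no deployment can be shared between two MDs, so assigning MD $n$ to server $m$ obliges depositing exactly $a_n$ units into that server's storage. Hence a choice of $(\mathcal{X}(t),\mathcal{Y}(t))$ satisfying C1--C3 exists iff the $N$ items admit a valid packing into the $k$ bins. The construction is plainly polynomial in the input size, so the NP-completeness of Bin Packing makes even \emph{deciding feasibility} of the single-slot $\mathcal{P}_1$ NP-complete; a fortiori, minimizing $\sum_n\tau_n(t)$ over C1--C4 is NP-hard, which proves the theorem.

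The main obstacle I anticipate is not the packing bijection itself but the careful justification of the implicit coupling and the isolation of the storage bottleneck. I would need to argue rigorously, directly from the delay equations, that an association to a server lacking the requested model yields an undefined or unbounded $\tau_n(t)$ (equivalently, is forbidden), so that feasibility of $\mathcal{P}_1$ coincides exactly with packing feasibility; and I must fix $B_m$, $F_m$, $P_n$, $N_0$ and the channel gains so that the bandwidth-sharing and compute-sharing terms are strictly slack for every MD, ensuring storage is the sole binding resource. A secondary difficulty will arise only in the objective-sensitive variant: there the nonlinear, piecewise-$\max$ pipelined delay $\tau_n(t)=\tau^{\text{c2e}}_{m,n,l}(t)+\tau^{\text{down}}_{m,n,l}(t)+\max\{\tau^{\text{local}}_{m,n,l}(t),\max(\tau^{\text{up}}_{m,n,l}(t),\tau^{\text{edge}}_{m,n,l}(t))\}$ is non-separable across stages, so matching it to the Multiple-Knapsack value function requires choosing workloads, data sizes and rates so that one stage (the cloud-fetch term) dominates the $\max$ uniformly, collapsing $\tau_n(t)$ to an affine function of the deployment indicator. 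Verifying that such a parameter regime is simultaneously feasible for all MDs and consistent with the resource bounds is the delicate step of that alternative route.
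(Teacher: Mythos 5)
Your reduction rests on a coupling that the formulation of $\mathcal{P}_1$ does not actually contain, and the step you yourself flag as the ``main obstacle'' --- deriving the serve-only-if-deployed requirement ``directly from the delay equations'' --- is precisely the step that fails. In the paper's model, the only quantity that involves the deployment variable is the cloud-fetch delay $\tau^{\text{c2e}}_{m,n,l}(t) = y_{m,n}(t)\,(1-x_{m,l}(t-1))\,x_{m,l}(t)\,D_l/R_m^{c2e}$, which \emph{vanishes} (rather than becoming unbounded or undefined) when $x_{m,l}(t)=0$; none of $\tau^{\text{down}}_{m,n,l}(t)$, $\tau^{\text{local}}_{m,n,l}(t)$, $\tau^{\text{up}}_{m,n,l}(t)$, $\tau^{\text{edge}}_{m,n,l}(t)$ depends on $x_{m,l}(t)$ at all. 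Consequently, in the literal single-slot problem restricted to C1--C4, the point $x\equiv 0$ together with any association satisfying C2 and any partition satisfying C4 is feasible --- indeed $x\equiv 0$ weakly minimizes the objective in the $\mathcal{X}$ dimension. Feasibility of problem (\ref{p1}) in a single slot is therefore trivial, your equivalence ``all MDs can be served iff the items pack'' never materializes, and the Bin Packing reduction does not go through against the problem as written. The fallback Multiple Knapsack route suffers from the same defect, since it too needs the deployment decision to influence the delay of the MDs it is supposed to serve, and in the written model it influences only the (deployment-penalizing) c2e term.

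The gap is repairable, but only by making explicit the constraint the paper leaves implicit: an MD may be associated with server $m$ for a request of model $l$ only if that model is deployed there, i.e., $y_{m,n}(t)\le x_{m,l}(t)$ for the requested $l$ (this is clearly the intended semantics, since otherwise the c2e term and the storage constraint are meaningless). Once that constraint is stated as part of the model, your construction is sound: with $M=k$ servers of capacity $C$, one distinct requested model of size $a_n$ per MD, and slack bandwidth and compute, C1 becomes a bin-capacity constraint, C2 forces every ``item'' to be placed, distinctness prevents any sharing of deployments, and deciding feasibility is exactly Bin Packing, so the single-slot optimization is NP-hard. A correct write-up must therefore introduce this coupling as an explicit modeling assumption (or as an added constraint in the constructed instance), not claim it follows from the delay expressions; as proposed, your proof attacks a problem different from the one formalized in (\ref{p1}). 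One further small repair: under the paper's definition of the partition variable ($z$ = number of layers run on the MD), your frozen choice $z_{m,n,l}(t)=K_l$ is \emph{full local} execution, so the quantity that must be kept slack is the device-side model download $D^{\text{dev}}_l(K_l)/R^{\text{down}}_{m,n}(t)$ rather than the server compute; freezing $z_{m,n,l}(t)=0$ (full edge) would make your ``abundant $F_m$, $B_m$'' argument line up with the terms that actually appear.
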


\begin{proof}
The proof can be found in Appendix A.
\end{proof}

Since Theorem \ref{theo1} can be regarded as a special case when $T=1$, we can also prove that problem (\ref{p1}) is NP-hard.

\section{LYAPUNOV-Based Problem Transformation, Decomposition, and Solution Design}
In this section, we initially transform the long-term optimization problem into a single-time-slot optimization problem based on Lyapunov optimization theory, thereby mitigating the impact of long-term constraint conditions. Subsequently, considering the distinct impacts of three different optimization strategies on the optimization objective, we decouple the problem for optimization. We first formulate the association problem between the MD and edge server as a coalition game problem. Then, we address the model deployment and partition optimization problem under a given coalition structure. Finally, through iterative optimization, we obtain an efficient suboptimal solution for the original problem, and analyze the properties of the proposed algorithms.

\subsection{Lyapunov-Based Problem Transformation}
In problem (\ref{p1}), constraint C5 is used to stabilize the time-averaged maximum data privacy leakage. However, this approach requires complete information of the system for every time slot, which is not feasible to acquire in advance in real-world systems. To address this issue, we introduce accumulated data privacy leakage in Definition 1.
\begin{myDef}\label{def1}
Let $\Xi(t+1)$ denote the accumulated data privacy leakage that exceeds the data privacy threshold over $t$ time slots, which can be calculated as
\begin{equation}\label{}
\Xi(t+1) =\left[\Xi(t)+ \sum_{n=1}^{N}\left(\Upsilon_n(t)-\bar{\Upsilon}_n \right)\right]^{+}.
\end{equation}
\end{myDef}

 $\Xi(0)=0$ denote the initial value of the accumulated data privacy leakage. A higher value of $\Xi(t)$ indicates that the data privacy leakage caused by inappropriate strategy-making exceeds the time-averaged privacy constraint. Based on Definition \ref{def1}, we can transform the long-term constraint C7 in problem (\ref{p1}) as follows
\begin{equation}\label{}
\lim _{T \rightarrow \infty} \frac{1}{T} \sum_{t=0}^{T-1} \mathbb{E}\left[\Xi(t)\right] \leq 0 .
\end{equation}

Then, we introduce the most widely applied Lyapunov function ${L}(\Xi(t)) \triangleq \frac{1}{2} \Xi^2(t)$, to measure the satisfaction status of the long-term privacy constraint. Thus, a smaller value of ${L}(\Xi(t))$ means a better data privacy satisfaction. To ensure the value of ${L}(\Xi(t))$ is small enough to meet long-term privacy constraints, we define $\Delta(\Xi(t))$ below by introducing Lyapunov drift function:
\begin{equation}\label{ldrift}
\Delta(\Xi(t)) \triangleq \mathbb{E}[L(\Xi(t+1))-L(\Xi(t)) \mid \Xi(t)].
\end{equation}
By expanding the above expression further, we can obtain that:
\begin{equation}\label{drift}
\begin{aligned} & \Delta(\Xi(t))=\frac{1}{2} \mathbb{E}\left[\Xi^2(t+1)-\Xi^2(t) \mid \Xi(t)\right] \\ & =\mathbb{E}\left[\Xi(t) \sum_{n=1}^{N}\left(\Upsilon_n(t)-\bar{\Upsilon}_n \right) \mid \Xi(t)\right]+ \\& \frac{1}{2} \mathbb{E}\left[ \sum_{n=1}^{N}\left(\Upsilon_n(t)-\bar{\Upsilon}_n \right)^2 \mid \Xi(t)\right].\end{aligned}
\end{equation}

Next, within the Lyapunov drift-penalty framework, we incorporate equation (\ref{drift}) into the optimization objective to strike a balance between minimizing inference delay and ensuring long-term privacy constraints, as shown below:
\begin{equation}\label{newobj}
\mathbb{E}\left[\tau(t)\mid \Xi(t) \right]+\Delta\left(\Xi(t)\right).
\end{equation}

According to (\ref{ldrift}), the calculation of $\Delta(\Xi(t))$ requires the information of $L(\Xi(t+1))$, which will not be available in each time slot $t$. Thus, we tend to obtain the upper bound of (\ref{newobj}) that can be calculated only based on the available information in time slot $t$. Firstly, it can be easily deduced that $\Upsilon_{n}(t)$ will be no lower than 0, and $\sum_{n=1}^{N}\left(\Upsilon_n(t)-\bar{\Upsilon}_n \right)$ will be no higher than $\sum_{n=1}^{N}\bar{\Upsilon}_{n}^2$. Then, define a constant $\Theta =\frac{1}{2}\sum_{n=1}^{N}\bar{\Upsilon}_{n}^2$, the upper bound of $\Delta\left(\Xi(t)\right)$ can be represented as:
\begin{equation}\label{upper}
\Delta\left(\Xi(t)\right) \leq \Xi(t) \cdot \mathbb{E}\left[\sum_{n=1}^{N}\left(\Upsilon_n(t)-\bar{\Upsilon}_n\right) \mid \Xi(t)\right]+\Theta
\end{equation}

Based on (\ref{upper}), the upper bound of (\ref{newobj}) can be deduced as:
\begin{equation}\label{upperbound}
\begin{aligned}
& \mathbb{E}\left[\tau(t)\mid \Xi(t) \right]+\Delta\left(\Xi(t)\right) \leq \\
&  \mathbb{E}\left[\tau(t)\mid \Xi(t) \right]-\Xi(t) \cdot \mathbb{E}\left[\sum_{n=1}^{N}\left(\bar{\Upsilon}_n-{\Upsilon}_n(t)\right) \mid \Xi(t)\right]+\Theta.
\end{aligned}
\end{equation}

Based on (\ref{upperbound}), Problem $\mathcal{P}_1$ can be approximated only rely on the information in each current time slot. Thus, it can be solved by finding the solution to the following problem $\mathcal{P}_2$ that minimize the upper bound in each time slot over $T$:
\begin{equation}\label{p2}
\begin{aligned}
& \mathcal{P}_2: \min _{\mathcal{X}(t), \mathcal{Y}(t), \mathcal{Z}(t)} \alpha \cdot \tau(t)-\Xi(t) \cdot \sum_{n=1}^{N}\left(\bar{\Upsilon}_n-{\Upsilon}_n(t)\right)+\Theta\\
& \text { s.t. } \quad \text { Constraints : }\left(C_1\right)-\left(C_4\right).
\end{aligned}
\end{equation}
where $\alpha$ is a positive constant that is used to achieve the tradeoff between delay minimization and satisfaction status of the long-term privacy constraints.

\subsection{Coalition Formation Game Model for Edge Server Association}
Using Lyapunov optimization, we reformulated the original long-term problem into a per-slot optimization. However, this per-slot problem remains NP-hard due to the combinatorial nature of the user–server association. Specifically, assigning $N$ MDs to $M$ edge servers results in a search space of size $N^M$, which is computationally intractable for large-scale systems. Moreover, user association is not a purely local decision. MDs sharing the same edge server must compete for limited wireless bandwidth and computation resources, leading to tightly coupled inference performance among them. This interdependence implies that which MDs associate with the same server significantly affects individual and collective inference delay. Hence, the association strategy should be jointly optimized among MDs who potentially share a server.

To efficiently address this problem structure, we adopt a coalition formation game framework. In this setting, each edge server and its associated MDs are viewed as a coalition, where MDs benefit from cooperating by associating with the same server while jointly managing shared resources. Compared to other game-theoretic models (e.g., non-cooperative games or auction-based schemes), coalition games naturally capture both the collaborative resource allocation and the dynamic group formation process. This allows us to design distributed algorithms where MDs autonomously form stable coalitions with edge servers based on local utility improvements, while significantly reducing the overall search complexity\cite{c37, xiaTPDS}. Therefore, we first model the association problem as a cooperative coalition formation game. Once stable coalitions are formed, we optimize model deployment and partition strategies within each coalition independently, following the decomposition logic. This layered approach enables scalable, decentralized decision-making while capturing key resource coupling and collaboration characteristics inherent to edge intelligence systems.

\subsubsection{Coalition Formation Game Model}
Under any given model deployment and model partition strategies, associating $N$ MDs with $M$ edge servers can be regarded as a coalition game involving $M$ coalitions. Initially, we provide the fundamental definitions concerning coalition games.
\begin{myDef}\label{def2}
The MD-server association coalition formation game is denoted by $\left(\mathcal{N}, U, \mathcal{F}\right)$, where $\mathcal{N}$ denote the $N$ MDs as the players, and $U$ is a mapping that determines the utility of the coalitions. $\mathcal{F}=\{\mathcal{F}_1,\mathcal{F}_2,\dots, \mathcal{F}_M\}$ denote the coalition set of $M$ mutually disjoint coalitions, where $\cap_{m=1}^M \mathcal{F}_m=\emptyset$ and $\cup_{m=1}^M \mathcal{F}_m=\mathcal{N}$. The strategy of each player is to joint or leave the coalition to improve the coalition utility, or exchange to improve the total utility.
\end{myDef}

\begin{myDef}\label{def3}
For coalition $F_m$ in coalition structure $\mathcal{F}$, we define the following function to characterize the coalition utility of $\mathcal{F}_m$:
\begin{equation}\label{}
U(\mathcal{F}_m) = \frac{-\alpha \sum \limits_{n \in F_m} \tau_n+\Xi(t) \cdot \sum \limits_{n \in \mathcal{F}_m}\left(\bar{\Upsilon}_n-{\Upsilon}_n(t)\right)-\Theta}{|\mathcal{F}_m|}
\end{equation}
where $|\mathcal{F}_m|$ denote the number of MDs in $\mathcal{F}_m$.
\end{myDef}

Based on Definition \ref{def3}, each coalition aims to minimize the average inference delay and privacy leakage by allocating the MDs into different coalitions, which also aligns with the original optimization objective. For any partition of $\mathcal{F}=\{\mathcal{F}_1,\mathcal{F}_2,\dots, \mathcal{F}_M\}$, the social welfare of the game is
\begin{equation}\label{}
U(\mathcal{F}) = \sum_{m=1}^{M} U(\mathcal{F}_m)
\end{equation}

To ensure that players can join a coalition that yields higher utility, it is necessary to define the players' preferences for different coalitions. Thus, we define the preference order for player $n$.
\begin{myDef}
For any player $n \in \mathcal{N}$, the preference order $\succ_{n}$ is defined as a complete, transitive, and reflexive binary relation over the set of all partitions that player $n$ can possibly form. Give two partitions $\mathcal{F}$ and $\mathcal{F}^{\prime}$, we say player $n$ prefers $\mathcal{F}$ over $\mathcal{F}^{\prime}$ if and only if:
\begin{equation}\label{}
\mathcal{F} \succ_{n} \mathcal{F}^{\prime} \Leftrightarrow U(\mathcal{F}) > U(\mathcal{F}^{\prime})
\end{equation}
\end{myDef}

Based on the above definition, players establish preferences across all possible coalitions, and each player selects the coalition that enhances the total utility of the newly formed coalitions according to the preference order. Thus, we further define the switch operation for forming the final coalitions as follows.
\begin{myDef}
Switch operation: Given a partition $\mathcal{F}=\{\mathcal{F}_1,\mathcal{F}_2,\dots, \mathcal{F}_M\}$ of $\mathcal{N}$, for player $n \in \mathcal{F}_m$, when and only when $\mathcal{F}_{m^{\prime}} \succ_{n} \mathcal{F}_m$ ($m^{\prime}\neq m$) is achieved, a switch operation moves player $n$ from $\mathcal{F}_m$ to $\mathcal{F}_{m^{\prime}}$, then $\mathcal{F}_m$ will be replaced by $\mathcal{F}_{m^{\prime}}$, where $\mathcal{F}_{m^{\prime}}=(\mathcal{F}\setminus\{\mathcal{F}_m, \mathcal{F}_{m^{\prime}}\})\cup\{ \mathcal{F}_m\setminus\{n\},\mathcal{F}_{m^{\prime}}\cup \{n\} \}$.
\end{myDef}

In coalition formation game, the coalition structure may trap in local optimum if only single-player switch operation is conducted. Therefore, in addition to the operation where a single player leaves to join another coalition, players from two distinct coalitions can also directly engage in exchange operations to achieve coalition updates, thereby attaining greater total utility. Thus, we define the coalition exchange operation as follows.
\begin{myDef}
Exchange operation: Given a partition $\mathcal{F}=\{\mathcal{F}_1,\mathcal{F}_2,\dots, \mathcal{F}_M\}$ of $\mathcal{N}$, two different players $n \in \mathcal{F}_{m}$, and $ n^{\prime} \in \mathcal{F}_{m^{\prime}}$, they perform exchange operations, i.e., $n$ moves from $\mathcal{F}_{m}$ to $\mathcal{F}_{m^{\prime}}$, $ n^{\prime}$ moves from $\mathcal{F}_{m^{\prime}}$ to $\mathcal{F}_{m}$, when and only when $\mathcal{F}_{m^{\prime}} \succ_{n,n^{\prime}} \mathcal{F}_m$, and the partition $\mathcal{F}$ is replaced by $\mathcal{F}^{\prime}$, where $\mathcal{F}^{\prime}=\mathcal{F} \backslash\left\{F_m, F_{m^{\prime}}\right\} \cup\left\{F_m \backslash\{n\} \cup\left\{n^{\prime}\right\}, F_{m^{\prime}} \backslash\left\{n^{\prime}\right\} \cup\{n\}\right\}$

\end{myDef}

\subsubsection{Algorithm for Coalition Formation Game}

Following the definitions of coalition switching and coalition exchange previously outlined, MDs are able to iteratively modify their associated edge servers through a sequence of steps. Within each step, only one MD can switch its associated server or two MDs in different coalitions can exchange their servers. After a coalition change, the joint model deployment and partition strategies are re-optimized only for the affected coalitions using the proposed Algorithm 2 and Algorithm 3, respectively. This localized update reduces unnecessary computation and maintains the scalability of the algorithm. However, due to the complexity introduced by the switch operation, the proposed algorithm performs an exchange operation only after $G$ switch operations. The value of $G$ is proportional to the problem scale. When $M$ and $N$ are large, a larger $G$ can effectively reduce the number of exchange operations, thereby lowering the algorithm's time complexity. In some cases, exchange operations can even be entirely omitted, allowing the algorithm to efficiently obtain a suboptimal solution. By repeating the above steps until the final partition structure reaches stable, we summarize the proposed distributed algorithm in Algorithm 1. The convergence, stability and complexity of the algorithm is analyzed in the next subsection.

\begin{algorithm}[!htbp]\label{Alg1}
	\caption{Coalition Formation Game-Based Algorithm}
	\begin{algorithmic}[1]
    \State Initializes a random partition $\mathcal{F}_{init}=\{\mathcal{F}_1,\mathcal{F}_2,\dots, \mathcal{F}_M\}$: Each MD associate the edge server based on the given partition;
    \State Set the current partition $\mathcal{F}_{cur}$ as $\mathcal{F}_{cur} = \mathcal{F}_{init}$;
    \State Set number of iteration $t=1$;
    \Repeat
    \State Randomly select a MD $n$ and its coalition $\mathcal{F}_{m} \in \mathcal{F}_{cur}$, and randomly select another coalition $\mathcal{F}_{m^{\prime}} \in \mathcal{F}_{cur}$, $m\neq m^{\prime}$;
    \State Set a temp partition $\mathcal{F}_{temp}=\{\mathcal{F}_{m}\setminus\{n\}, \mathcal{F}_{m^{\prime}}\cup\{n\}\}$;
    \State Obtain the new partition $\mathcal{F}_{new}=(\mathcal{F}_{cur}\setminus\{\mathcal{F}_{m}, \mathcal{F}_{m^{\prime}}\})\cup \mathcal{F}_{temp}$;
    \State Optimize the model deployment strategy via \textbf{Algorithm 2}, and model partition strategy via \textbf{Algorithm 3} \textbf{for the two affected coalitions} in $\mathcal{F}_{cur}$ and $\mathcal{F}_{new}$;
    \State Calculate the coalition utility of $\mathcal{F}_{cur}$ and $\mathcal{F}_{new}$;
    \If{$\mathcal{F}_{new} \succ_{n}  \mathcal{F}_{cur}$}
        \State $\mathcal{F}_{cur} =\mathcal{F}_{new}$ ;
    \EndIf
    \If{$t\% G=0$}
        \State Randomly select two MDs $n$, $n^{\prime}$ from two different coalitions $\mathcal{F}_{m}$ and $\mathcal{F}_{m^{\prime}}$;
        \State Set a temp partition $\mathcal{F}_{temp}=\{(\mathcal{F}_m\setminus \{n\}) \cup \{n^{\prime}\},\ (\mathcal{F}_{m^{\prime}} \setminus \left\{n^{\prime}\right\}) \cup\{n\}\}$;
        \State Obtain the new partition $\mathcal{F}_{new}=(\mathcal{F}_{cur}\setminus\{\mathcal{F}_{m}, \mathcal{F}_{m^{\prime}}\})\cup \mathcal{F}_{temp}$;
        \State Optimize the model deployment strategy via \textbf{Algorithm 2}, and model partition strategy via \textbf{Algorithm 3} \textbf{for the two affected coalitions} in $\mathcal{F}_{cur}$ and $\mathcal{F}_{new}$;
        \State Calculate the coalition utility of $\mathcal{F}_{cur}$ and $\mathcal{F}_{new}$;
        \If{$\mathcal{F}_{new} \succ_{n,n^{\prime}}  \mathcal{F}_{cur}$}
            \State $\mathcal{F}_{cur} =\mathcal{F}_{new}$ ;
        \EndIf
    \EndIf
    \State $t=t+1$;
    \Until{The final partition reaches Nash-stable.}
	\end{algorithmic}
\end{algorithm}

\subsection{Problem Decomposition within Each Coalition}

After obtaining a given coalition structure, that is, the association pattern between MDs and edge servers, it is necessary to optimize the model deployment strategy and model partition strategy to maximize coalition utility. Given a fixed coalition (i.e., an edge server and its associated MDs), the joint optimization problem can be decomposed into two subproblems: (i) the model deployment subproblem, which can be formulated as a submodular maximization under storage constraint; and (ii) the model partition subproblem, solvable via exhaustive search over a finite set of partition candidates.

\subsubsection{Greedy-Based Algorithm for Model Deployment Problem}

Under the given edge association and model partition strategies, the optimization goal of the model deployment problem is to minimize the average delay and privacy loss of all MDs under limited model storage space. This problem has been proven to be NP-hard according to Theorem~\ref{theo1}. To derive an efficient suboptimal solution, we first establish the following property:

\begin{myTheo}\label{theo3}
The model deployment subproblem is a submodular maximization problem within each coalition under a given model partition strategy.
\end{myTheo}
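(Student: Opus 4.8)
The plan is to show that the model deployment objective, viewed as a set function over the collection of (server, model) pairs chosen for deployment, is monotone and submodular, and that the storage constraint C1 forms a (partition) matroid or a knapsack-type constraint, so that the subproblem falls into the class of submodular maximization problems for which the greedy algorithm gives provable guarantees. Concretely, for a fixed coalition $\mathcal{F}_m$ with associated edge server $m$ and fixed partition strategy $\mathcal{Z}(t)$, I would define a ground set $\mathcal{S} = \{(m,l) : l \in \mathcal{L}\}$ and, for any candidate deployment set $A \subseteq \mathcal{S}$, define a gain function $g(A)$ equal to the reduction in the per-coalition objective of $\mathcal{P}_2$ (i.e. the weighted delay plus privacy penalty in $U(\mathcal{F}_m)$) achieved by deploying the models in $A$ relative to the baseline where every requested but undeployed model must be fetched from the cloud. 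Maximizing coalition utility is then equivalent to maximizing $g$.

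First I would make explicit the baseline-versus-deployed delay decomposition: for each MD $n \in \mathcal{F}_m$ requesting model $l$, deploying $l$ on server $m$ eliminates the cloud-to-edge transmission term $\tau^{\text{c2e}}_{m,n,l}(t)$ (which is incurred only when $x_{m,l}(t-1)=0$), while all other delay components in $\tau_n(t)$ depend on the partition point and association but not on the deployment decision. Hence the marginal benefit of adding a pair $(m,l)$ to $A$ is the expected saved cloud-download delay summed over the MDs in $\mathcal{F}_m$ that request $l$, weighted by $\alpha$ and the request probability $p_{l,n}(t)$. Second I would verify the three defining properties: (a) $g(\emptyset)=0$; (b) monotonicity, since deploying an additional model can only remove a nonnegative cloud-fetch delay and never increases the objective; and (c) submodularity (diminishing returns), which I would establish by checking that the marginal gain $g(A \cup \{(m,l)\}) - g(A)$ is nonincreasing in $A$. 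The latter follows because the saving attributable to deploying model $l$ is a function of the fixed set of MDs requesting $l$ and is unaffected by whether other models are deployed, so the marginal gains are in fact constant in $A$ up to the feasibility cutoff imposed by storage — a modular-plus-saturation structure that is a special (and sufficient) case of submodularity.

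The main obstacle I anticipate is the interaction through the shared storage budget $C_m$ and, more subtly, through the shared bandwidth and computation resources $b_{m,n}(t)$ and $f_{m,n}(t)$, which depend on $N_m(t)=|\mathcal{F}_m|$ rather than on the deployment set; I would need to argue that since the coalition membership is fixed when the deployment subproblem is solved, these shared-resource terms are constants with respect to $A$ and therefore do not break the diminishing-returns property. The only genuine coupling left is the knapsack constraint $\sum_{l} x_{m,l}(t) D_l \le C_m$ from C1; I would note that submodular maximization subject to a knapsack constraint still admits a greedy algorithm with a constant-factor approximation guarantee, which is exactly the setting invoked for Algorithm 2. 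I would close by remarking that if the pipelined $\max\{\cdot,\cdot\}$ structure of $\tau_n(t)$ caused the per-model saving to be non-separable, submodularity could fail; the argument goes through precisely because the deployment decision enters $\tau_n(t)$ only additively through $\tau^{\text{c2e}}_{m,n,l}(t)$, which is the key simplification I would make prominent.
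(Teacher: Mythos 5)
Your overall scaffolding matches what the paper builds Algorithm~2 on: a set function over the models deployable on the coalition's server, a monotonicity-plus-submodularity argument, the storage budget treated as a knapsack constraint, and a greedy constant-factor guarantee. Your observation that the shared resources $b_{m,n}(t)$ and $f_{m,n}(t)$ depend only on the fixed coalition membership $N_m(t)$, and not on the deployment set, is also a step any correct proof here needs.

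However, there is a genuine gap in the delay accounting at the core of your gain function. You define the benefit of deploying model $l$ as the \emph{elimination} of the cloud-to-edge term $\tau^{\text{c2e}}_{m,n,l}(t)$, against a baseline in which undeployed models are fetched from the cloud per request. This is backwards relative to the paper's model: by definition, $\tau^{\text{c2e}}_{m,n,l}(t) = y_{m,n}(t)\,(1-x_{m,l}(t-1))\,x_{m,l}(t)\,D_l/R_m^{c2e}$ is incurred precisely when a model \emph{is} newly deployed ($x_{m,l}(t)=1$, $x_{m,l}(t-1)=0$) and vanishes when $x_{m,l}(t)=0$; an undeployed model pays no cloud-fetch cost in the per-slot problem --- what it loses is the possibility of edge-side execution altogether. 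So in the subproblem Algorithm~2 actually solves (with $x_{m,l}(t-1)$ given), the marginal gain of adding $l$ must be the fallback cost of the MDs requesting $l$ (e.g., fully local execution, with zero privacy loss) \emph{minus} the optimized partitioned-inference cost \emph{including} the c2e download for newly deployed models. Two consequences follow. First, monotonicity, which you dismiss as obvious ("deploying can only remove a nonnegative delay"), now genuinely needs proof: one must show that partitioned inference with the re-optimized split point $z_{m,n,l}(t)$ (Algorithm~3), c2e cost netted in, is never worse than the fallback --- without this, a model whose download cost exceeds its computational benefit would have negative marginal gain, breaking the monotone-submodular setting your $(1-1/e)$-type guarantee relies on. Second, your conclusion that the marginal gains are constant in $A$ means you are really proving modularity, a degenerate special case of submodularity; that technically satisfies the theorem statement, but it only does so once the gain function is redefined as above, because with your sign convention the function you maximize is not the per-coalition objective of $\mathcal{P}_2$.
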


\begin{proof}
The proof can be found in Appendix B.
\end{proof}

This submodularity arises from the diminishing marginal gain of deploying additional models as the server's limited storage becomes saturated. Based on this, we propose a greedy-based algorithm, summarized in Algorithm~2, which guarantees a \((1 - \frac{1}{e})\)-approximation ratio~\cite{xiaTPDS}. The key idea is to calculate a weighted utility-to-storage ratio \(\varpi_j\) for each model \(j\), capturing its potential utility gain normalized by storage cost. The algorithm iteratively selects the most cost-effective model until the server's storage capacity is exhausted.
\begin{algorithm}[t]\label{Alg-2}
	\caption{Greedy-Based Algorithm for Model Deployment}
	\begin{algorithmic}[1]
    \Require $\mathcal{L}$, $\mathcal{F}_m$, $C_m$, $\{D_l\}$, $\{p_{l,n}\}$
    \Ensure $\{x_{l,m}\}$
    \State Initialize deployed set $V = \emptyset$, utility $f(V)=0$, remaining storage $C^{\prime}=C_m$;
    \While{$C^{\prime} > 0$ and $\mathcal{L} \setminus V \neq \emptyset$}
        \For{each $j \in \mathcal{L} \setminus V$}
           \State Calculate aggregated request probability: $p_j = \sum_{n \in \mathcal{F}_m} p_{j,n}$;
           \State Compute marginal gain: $\Delta f(V, j) = f(V \cup \{j\}) - f(V)$;
           \State Compute cost-effectiveness: $\varpi_j = p_j \cdot \Delta f(V, j) / D_j$;
        \EndFor
        \State Select $j^* = \arg \max_{j \in \mathcal{L} \setminus V,\ D_j \leq C'} \varpi_j$;
        \If{$j^*$ exists}
            \State $V = V \cup \{j^*\}$;
            \State $f(V) = f(V \cup \{j^*\})$;
            \State $C' = C' - D_{j^*}$;
        \Else
            \State Break;
        \EndIf
     \EndWhile
     \State Set $x_{l,m}=1$ if $l \in V$, otherwise $x_{l,m}=0$.
	\end{algorithmic}
\end{algorithm}

For Algorithm~2, the model deployment subproblem is solved using a greedy heuristic over the submodular objective. In each iteration, the algorithm evaluates the utility gain of adding each undeployed model and selects the one with the highest weighted utility-to-storage ratio. As there are at most $L$ models, and each iteration performs up to $L$ evaluations, the overall time complexity is $\mathcal{O}(L^2)$.
\subsubsection{Optimal Algorithm for Model Partition Problem}

After obtaining the model deployment strategy within each coalition, the next step is to determine the optimal model partition strategy. For each MD, the goal is to select a model partition point such that the coalition utility is maximized (i.e., the combined delay and privacy loss is minimized). Since the number of partition layers in a DNN is finite and usually moderate, we adopt an exhaustive search strategy to find the optimal partition layer for each deployed model.
\begin{algorithm}[t]\label{Alg-3}
	\caption{Optimal Algorithm for Model Partition}
	\begin{algorithmic}[1]
    \Require $L$, $\mathcal{F}_m$, $\{x_{m,l}\}$, $\{\mathcal{A}_l\}$, $\{D_l\}$
    \Ensure $\{z_{m,n,l}\}$
    \For{each $l \in \mathcal{L}$ such that $x_{l,m}=1$}
        \For{each $n \in \mathcal{F}_m$}
            \State Initialize: $z^*_{m,n,l} \leftarrow 0$, $Obj \leftarrow +\infty$;
            \For{each partition index $z \in \mathcal{A}_l$}
                \State Compute objective $Obj(z)$ of $\mathcal{P}_2$ for $z_{m,n,l} = z$;
                \If{$Obj(z) < Obj$}
                    \State $Obj \leftarrow Obj(z)$,\quad $z^*_{m,n,l} \leftarrow z$;
                \EndIf
            \EndFor
            \State Save $z^*_{m,n,l}$ to output set $\{z_{m,n,l}\}$;
        \EndFor
    \EndFor
	\end{algorithmic}
\end{algorithm}

For Algorithm~3, the model partition strategy is determined by exhaustively searching over all candidate partition points for each MD–model pair. Suppose the maximum number of partition layers for any model is $K$, and the coalition contains $N_m$ MDs and $L_m$ deployed models. Then, the total complexity is $\mathcal{O}(N_m L_m A)$. Notably, each MD's partition decision is independent of others, and therefore this process can be executed in parallel across MDs, leading to significant runtime reduction in practical deployment.

\subsection{Algorithm Property Analysis}
In this part, we analyze the convergence, stability and complexity of the proposed
We first analyze the convergence of the proposed algorithm.
\begin{myTheo}\label{theo3}
The proposed coalition formation game-based algorithm in Algorithm 1 always converges to a final partition $\mathcal{F}_{final}$ with regardless the initial partition structure.
\end{myTheo}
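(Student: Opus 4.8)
The plan is to treat the social welfare $U(\mathcal{F}) = \sum_{m=1}^{M} U(\mathcal{F}_m)$ as a potential function and to argue that Algorithm 1 generates a sequence of partitions along which this potential is strictly increasing; since the set of feasible partitions is finite, such a monotone sequence must terminate. Concretely, I would first fix the interpretation of $U(\mathcal{F})$ as a deterministic function of the partition alone: for any coalition structure $\mathcal{F}$, the deployment strategy is obtained from Algorithm 2 and the partition strategy from Algorithm 3, both of which return a uniquely determined output for a given coalition composition. Hence $U(\mathcal{F}_m)$ is well-defined for each coalition, and so is the social welfare $U(\mathcal{F})$.

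Next I would establish strict monotonicity of $U(\cdot)$ along the trajectory. Consider any committed switch that moves MD $n$ from $\mathcal{F}_m$ to $\mathcal{F}_{m'}$. By the acceptance rule the move is kept only when $\mathcal{F}_{new} \succ_{n} \mathcal{F}_{cur}$, which, by the definition of the preference order, is equivalent to $U(\mathcal{F}_{new}) > U(\mathcal{F}_{cur})$. The same reasoning applies verbatim to a committed exchange, where the criterion $\mathcal{F}_{new} \succ_{n,n'} \mathcal{F}_{cur}$ again reduces to $U(\mathcal{F}_{new}) > U(\mathcal{F}_{cur})$. Because only the two affected coalitions are modified and re-optimized while the remaining coalitions are untouched, the additive form $U(\mathcal{F}) = \sum_{m} U(\mathcal{F}_m)$ guarantees that the unaffected terms cancel in the comparison, so the improvement of the two affected coalitions is exactly the improvement of the global objective. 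Consequently every committed operation strictly increases $U(\mathcal{F})$, while iterations at which no improving move exists leave the partition unchanged.

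I would then invoke finiteness to conclude convergence. The number of ways to assign $N$ MDs to $M$ disjoint coalitions is finite — bounded, for instance, by $M^{N}$ — so $U$ takes only finitely many distinct values over the feasible partition space. A strictly increasing real-valued sequence drawn from a finite value set cannot repeat a value, hence cannot revisit any partition; therefore the algorithm performs only finitely many improving operations before reaching a partition $\mathcal{F}_{final}$ at which neither a switch nor an exchange can raise the social welfare. Since this is precisely the termination (Nash-stability) condition in the \textbf{Until} clause, $\mathcal{F}_{final}$ is attained after finitely many steps starting from any $\mathcal{F}_{init}$, which gives the claimed initial-condition independence.

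The main obstacle I anticipate is not the monotonicity bookkeeping but justifying that $U(\mathcal{F})$ is genuinely a function of $\mathcal{F}$ alone. This requires that the inner optimizers depend only on the composition of each coalition, so that the same coalition always yields the same utility; otherwise the potential-function argument could stall on cycles of equal-utility partitions. I would resolve this by noting that Algorithm 3 returns the exact optimum of the per-coalition partition subproblem, and Algorithm 2 is a fixed greedy rule with deterministic tie-breaking, making $U(\mathcal{F}_m)$ unambiguous for every coalition and thereby validating the strict-improvement-plus-finiteness argument.
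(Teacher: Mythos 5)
Your proposal is correct and follows essentially the same route as the paper's proof: the social welfare $U(\mathcal{F})$ serves as a potential function that strictly increases with every committed switch or exchange operation (by the utility-based definition of the preference orders), and since the number of distinct partitions of $N$ MDs into $M$ coalitions is finite, no partition can be revisited, so the algorithm must terminate at a final partition $\mathcal{F}_{final}$ independent of the initialization. Your additional remark on why $U(\mathcal{F})$ is a well-defined function of the partition alone (deterministic outputs of Algorithms 2 and 3) is a sound refinement of the same argument rather than a different approach.
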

\begin{proof}
The proof can be found in Appendix C.
\end{proof}

Then, we analyze the stability of the proposed algorithm, before that, we give the definition of $\mathbb{D}$-stability.
$\mathbb{D}$-stability (Deviation Stability) is a strong notion of coalition stability, ensuring that no group of players has an incentive to reorganize into a different coalition structure that yields a strictly higher total utility (social welfare)~\cite{c37}. We begin with the formal definition:
\begin{myDef}
A partition $\mathcal{F} = \{\mathcal{F}_1, \mathcal{F}_2, \dots, \mathcal{F}_M\}$ is said to be $\mathbb{D}$-stable if, for any possible deviation by a subset of players resulting in a new partition $\mathcal{F}'$, the total utility does not improve:
\begin{equation}
\sum_{\mathcal{F}_i \in \mathcal{F}'} U(\mathcal{F}_i) \leq \sum_{\mathcal{F}_i \in \mathcal{F}} U(\mathcal{F}_i)
\end{equation}
\end{myDef}
In our algorithm, only two types of deviations are permitted: (i) single-player switch operations and (ii) two-player exchange operations, both strictly increasing the total utility. Based on this, we establish the following result:
\begin{myTheo}\label{theo3}
The final converged partition $\mathcal{F}_{\text{final}}$ obtained by Algorithm~1 is $\mathbb{D}$-stable with respect to the defined deviation operations.
\end{myTheo}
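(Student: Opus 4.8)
The plan is to reduce $\mathbb{D}$-stability to the stopping condition of Algorithm~1, exploiting the fact that every admissible operation is evaluated through the single social-welfare functional $U(\mathcal{F})=\sum_{m=1}^{M}U(\mathcal{F}_m)$. First I would invoke the established convergence of Algorithm~1 to fix a terminal partition $\mathcal{F}_{\text{final}}$, and observe that the repeat-loop halts only when the partition is Nash-stable, i.e.\ when no single-player switch and no two-player exchange is accepted.

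Next I would unpack the two acceptance tests in terms of $U$. A switch relocating player $n$ is executed exactly when $\mathcal{F}_{new}\succ_{n}\mathcal{F}_{cur}$, which by the definition of the preference order means $U(\mathcal{F}_{new})>U(\mathcal{F}_{cur})$; an exchange between $n$ and $n'$ is executed exactly when $\mathcal{F}_{new}\succ_{n,n'}\mathcal{F}_{cur}$, again giving $U(\mathcal{F}_{new})>U(\mathcal{F}_{cur})$. Because neither test fires at $\mathcal{F}_{\text{final}}$, every partition $\mathcal{F}'$ reachable from $\mathcal{F}_{\text{final}}$ by one switch or one exchange must satisfy $U(\mathcal{F}')\le U(\mathcal{F}_{\text{final}})$; otherwise the corresponding move would have been accepted and the algorithm would not have stopped. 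Since the defined deviation operations are precisely these switches and exchanges, this yields $\sum_{\mathcal{F}_i\in\mathcal{F}'}U(\mathcal{F}_i)\le\sum_{\mathcal{F}_i\in\mathcal{F}_{\text{final}}}U(\mathcal{F}_i)$ for every admissible deviation, which is exactly the defining inequality of $\mathbb{D}$-stability.

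The hard part will be reconciling the scope of the $\mathbb{D}$-stability definition, which literally quantifies over deviations by \emph{arbitrary} subsets of players, with the restricted operation set actually used by the algorithm. I would make explicit that the qualifier \emph{with respect to the defined deviation operations} narrows the quantifier to single-player switches and two-player exchanges, so that forbidding improving moves of these two types is sufficient. A secondary subtlety I would address is that the Nash-stable stopping rule must rule out improving moves of \emph{both} types simultaneously, not merely switches; this follows from the loop structure, in which the exchange check is invoked every $G$ iterations and termination is declared only once the partition is stable under both the switch and the exchange tests.
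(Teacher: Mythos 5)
Your proposal is correct and follows essentially the same route as the paper: it leverages the fact that both permitted deviations (single-player switches and two-player exchanges) are accepted only when they strictly increase the social welfare $U(\mathcal{F})$, so the terminal partition—at which no such operation fires—cannot admit any utility-improving deviation of the defined types, which is precisely $\mathbb{D}$-stability as scoped by the theorem. Your two clarifications (narrowing the quantifier to the defined operations, and noting that termination must certify stability under both the switch and exchange tests) are exactly the points the paper's argument rests on as well.
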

\begin{proof}
The proof can be found in Appendix D.
\end{proof}

In Algorithm~1, each switch or exchange operation involves computing the coalition utility and executing Algorithm~2 and Algorithm~3. The complexity of coalition utility evaluation is $\mathcal{O}(M)$, while Algorithm~2 and Algorithm~3 respectively incur $\mathcal{O}(L^2)$ and $\mathcal{O}(NK)$ complexity.
\begin{table*}[t]
    \centering
    \small
    \setlength{\tabcolsep}{1pt}
    \renewcommand{\arraystretch}{1.1}
   \caption{Layer-wise Data Size, Computation (Comp.) Load, Communication (Comm.) Load, and Privacy Loss Possibility of VGG19}
    \label{tab:layer_params}
    \begin{tabular}{|l |c |c |c |c | l |c| c |c| c|}
        \toprule
       \textbf{Layer} & \textbf{Data Size} & \textbf{Comp. Load} & \textbf{Comm. Load } & \textbf{Possibility} &
        \textbf{Layer} & \textbf{Data Size} & \textbf{Comp. Load} & \textbf{Comm. Load} & \textbf{Possibility} \\ \hline
        & (KB) & (MFLOPs) & (KB) & & & (KB) & (MFLOPs) & (KB) &  \\ \hline
        \midrule
        Conv1\_1  & 7       & 86.7    & 12544  & 1.000  & Conv4\_2  & 9218    & 1849.7 & 1568  & 0.1636  \\ \hline
        Conv1\_2  & 144.3   & 1849.7  & 12544  & 0.9973 & Conv4\_3  & 9218    & 1849.7 & 1568  & 0.1586  \\\hline
        Conv2\_1  & 288.5   & 924.8   & 6272   & 0.9439 & Conv4\_4  & 9218    & 1849.7 & 1568  & 0.0793  \\\hline
        Conv2\_2  & 576.5   & 1849.7  & 6272   & 0.9137 & Conv5\_1  & 9218    & 462.4  & 392   & 0.0785  \\\hline
        Conv3\_1  & 1153    & 924.8   & 3136   & 0.8308 & Conv5\_2  & 9218    & 462.4  & 392   & 0.0627  \\\hline
        Conv3\_2  & 2305    & 1849.7  & 3136   & 0.6128 & Conv5\_3  & 9218    & 462.4  & 392   & 0.0551  \\\hline
        Conv3\_3  & 2305    & 1849.7  & 3136   & 0.4335 & Conv5\_4  & 9218    & 462.4  & 392   & 0.0381  \\\hline
        Conv3\_4  & 2305    & 1849.7  & 3136   & 0.4314 & FC1       & 401424  & 102.8  & 16    & 0.0300  \\\hline
        Conv4\_1  & 4610    & 924.8   & 1568   & 0.3311 & FC2       & 65552   & 16.8   & 16    & 0.0191  \\\hline
        -         & -       & -       & -      & -      & FC3       & 1616.4  & 0.41   & 0.4     & 0.0000  \\\hline
        \bottomrule
    \end{tabular}
\end{table*}
Assuming Algorithm~1 converges after $T$ total iterations, with one exchange operation every $G$ switches, the overall time complexity is:
\begin{equation}
\mathcal{O}\left( \left(1 + \frac{1}{G} \right) T \cdot (L^2 + NK + M) \right)
\end{equation}

Since all terms grow polynomially with the input size, the algorithm provides an efficient and scalable solution in practice.

\section{Simulation Evaluation}

\subsection{Simulation Setup}
In the simulation, the system consists of 100 MDs and 10 edge servers by default. Each MD has a randomly assigned computational capability $f_{n}$, with values uniformly distributed between $[10, 100]$ GFLOPS. We assume the computation capacity of each edge server $f_{m}$ follows uniform distribution of $[500, 2000]$ GFLOPS, and the storage capacity $C_{m}$ vary randomly within the range of $[2, 5]$ GB\cite{c33,c37}. We set $\alpha=1$. For the wireless communication setup, the transmission power of the server is fixed at 43 dBm, while each MD transmits at 23 dBm. The total available communication bandwidth for each server is set to 100 MHz for both uplink and downlink\cite{c43}. The channel noise is modeled with a Gaussian distribution having a standard deviation of 8 dB. The distance between the users and the base station is randomly distributed between 100 and 200 meters. The path loss exponent $\vartheta$ is set to 3.5, which is characteristic of a non-line-of-sight (NLOS) propagation environment. We assume the transmission data rate between the cloud and edge server is within the range of $[500, 700]$ Mbps\cite{c33,c37}.

Regarding the inference models, we consider nine DNN models that are commonly used in edge inference tasks, namely: VGG19, VGG16, VGG13, ResNet50, ResNet34, ResNet18, LeNet12, LeNet9, and LeNet7. These models represent various complexities and computational demands, ranging from relatively simple architectures like LeNet7 to more complex ones like ResNet50. For each of these models, we assume that multiple inference tasks are derived based on specific downstream requirements such as task precision or real-time processing constraints\cite{c43}. For example, each model can be assumed to give rise to 10 distinct inference services, resulting in a total of 90 inference services. Each MD's inference data size, which represents the amount of data to be processed for each service, is randomly distributed between 10 and 30 data units (e.g., number of images), simulating the variability of computational workloads typical of applications such as image classification, object recognition, and machine learning tasks in edge environments. Additionally, each MD has a data privacy leakage constraints, which is randomly distributed between 40\% and 70\% of the total inference raw data. The simulation results presented subsequently are averaged over 100 time slots. We assume the inference requests follows an uniform distribution.

We present the network architecture and parameters of VGG19 as an example, as shown in Table 1. This table provides detailed information on each layer’s data size (KB), computation load (MFLOPs), communication Load (KB), and privacy leakage loss (possibility), allowing readers to better understand the computational and communication characteristics of DNN models.

As the first work to jointly optimize model deployment, model partitioning, and user association under privacy constraints, we compare our framework with several representative baselines, including matching theory and DRL-based algorithms. These benchmarks, along with the widely adopted Full Local and Full Edge scenarios, comprehensively demonstrate the effectiveness and superiority of our proposed approach in balancing delay and privacy in edge intelligence systems:
\begin{itemize}
  \item \textbf{Full Local, FL}: In this scenario, all models are fully deployed and executed locally on each MD based on our proposed algorithms. This approach ensures complete data privacy, as no data is offloaded to the edge servers. The primary purpose of this algorithm is to provide a baseline for the inference delay when all computation remain entirely local, thus reflecting the worst-case latency scenario in terms of computation resource limitations on MDs.
  \item \textbf{Full Edge, FE}: The Full Edge algorithm assumes that all inference services are offloaded to the edge servers for computation. In this configuration, the edge servers handle the entire computational load, leading to full exposure of the data privacy. This setup serves as a benchmark to evaluate the inference delay when privacy is completely compromised, providing insights into the performance of edge-based computation without any privacy constraints. This setup serves as a benchmark to evaluate the inference delay when privacy is completely compromised, providing insights into the performance of edge-based inference without any privacy constraints.
  \item \textbf{Matching}: The Matching algorithm employs matching theory to solve the problem of allocating MDs to edge servers and selecting appropriate models for deployment\cite{c33}. This algorithm models the problem as a bipartite graph matching problem, where one set of nodes represents MDs and the other represents edge servers and models. The goal of the Matching algorithm is to minimize the average inference delay while adhering to the privacy constraints for single time slot.
  \item \textbf{LyDQN}: Based on the approach outlined in \cite{LyDQN}, the LyDQN algorithm utilizes Lyapunov optimization to transform the problem into a single-slot optimization problem. Then, the system seeks to minimize the long-term average inference delay while adhering to the privacy constraints by optimizing the joint server association, model deployment and model partition strategies in real time with Deep Q-Network (DQN) techniques.
\end{itemize}

\subsection{Simulation Results}
Fig. \ref{fig5} investigates the impact of the trade-off parameter $\alpha$ in (\ref{p2}) on the average delay and privacy loss. As shown in the figure, a larger $\alpha$ results in a faster decline and lower steady-state average delay, as the algorithm places greater emphasis on minimizing delay. In contrast, a smaller $\alpha$ shifts priority toward privacy, leading to a slight increase in average delay. Nevertheless, the privacy loss for all settings remains within acceptable bounds and gradually stabilizes over time slots. These results highlight that appropriate selection of $\alpha$ enables flexible control over the balance between latency and privacy, allowing the system to satisfy long-term privacy constraints while maintaining efficient performance for large-scale user scenarios. In the following simulation, we take the average results of $\alpha=1$.
\begin{figure}[t]
    \centering
    \subfigure[] {\includegraphics[width=1.72in,angle=0]{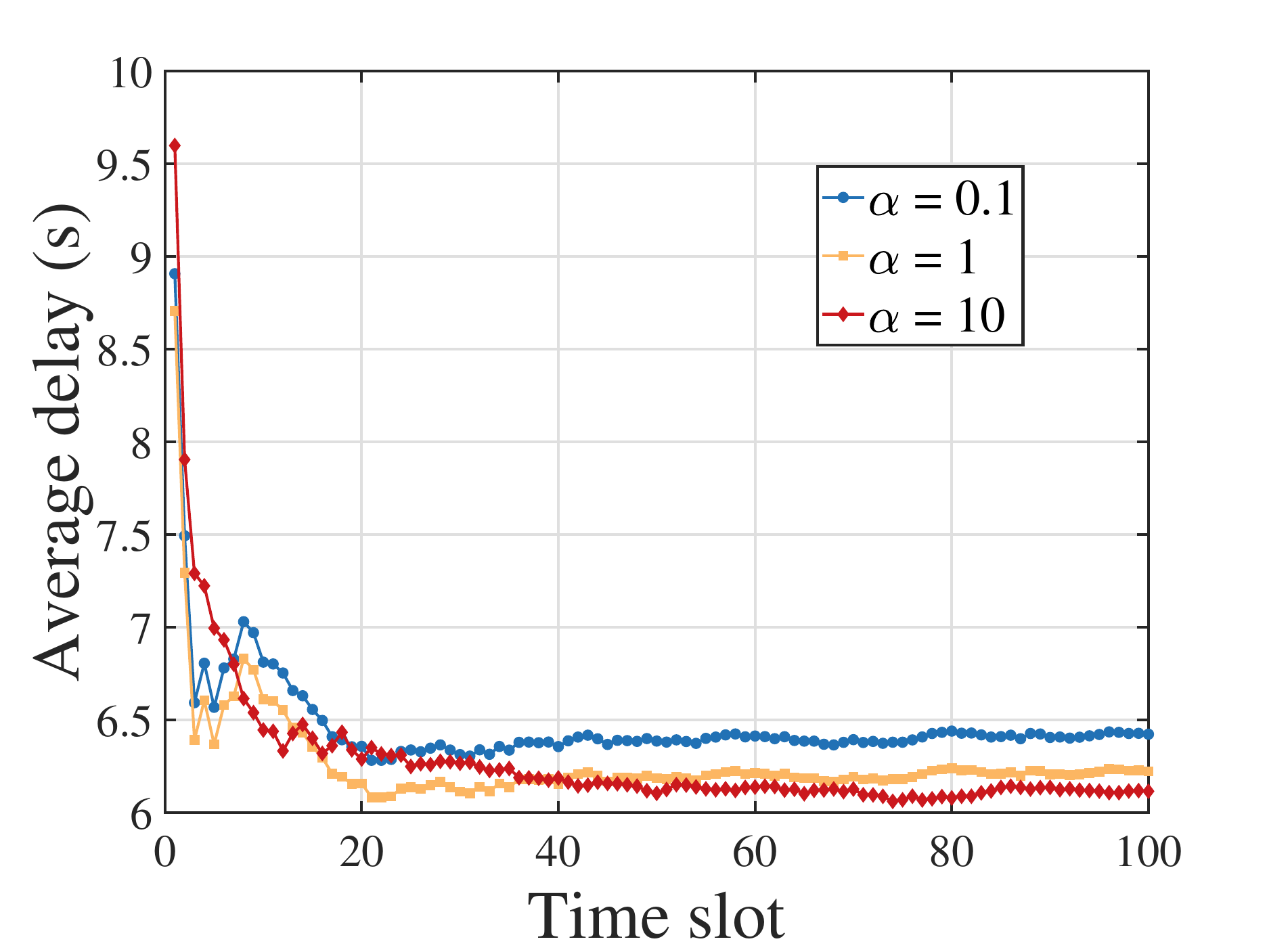}}
    \subfigure[] {\includegraphics[width=1.72in,angle=0]{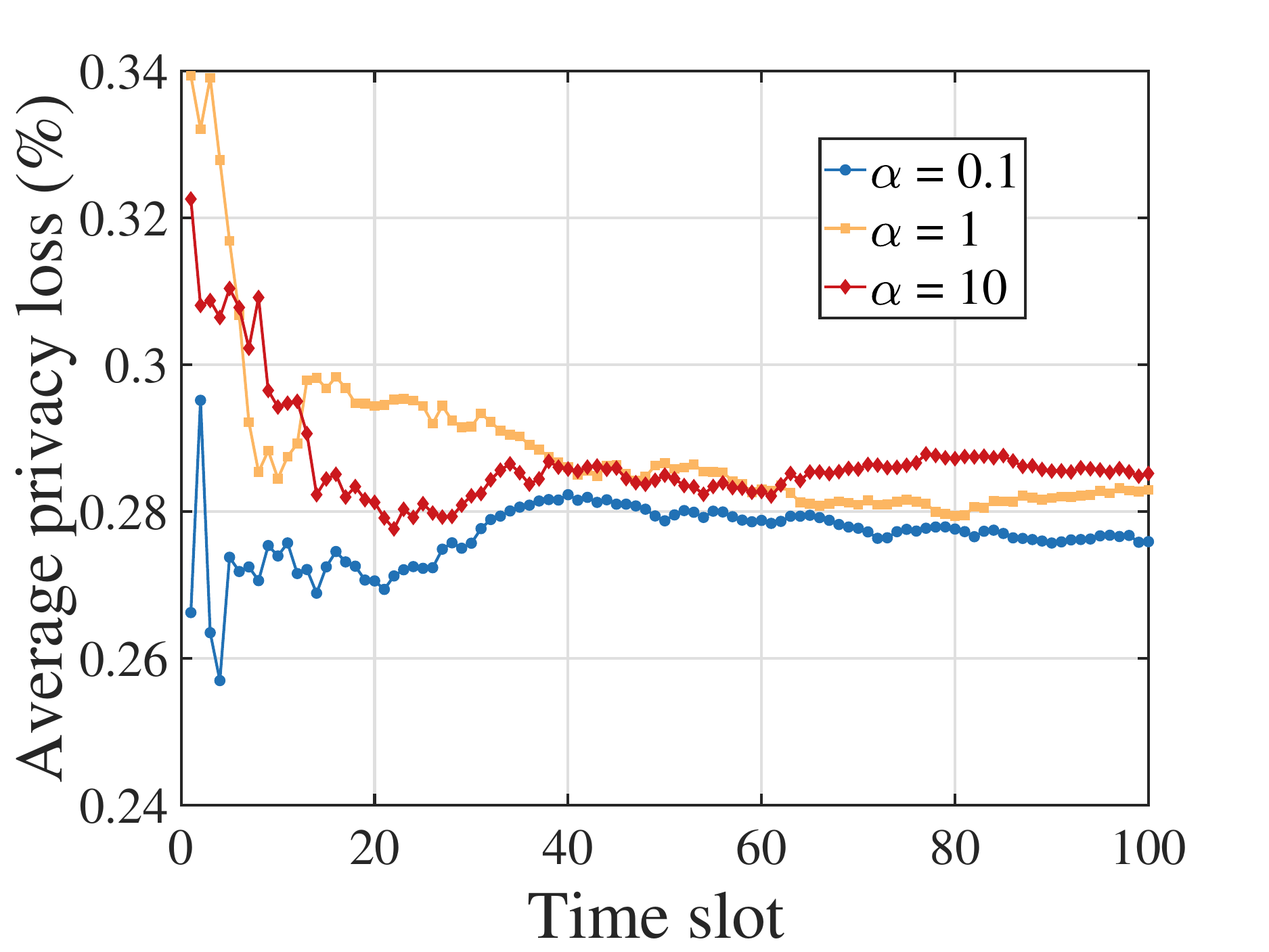}}
   \caption{Average Delay and Privacy Loss versus the Trade-off Parameter $\alpha$.}
    \label{fig5}
\end{figure}
\begin{figure}[t]
    \centering
    \subfigure[] {\includegraphics[width=1.72in,angle=0]{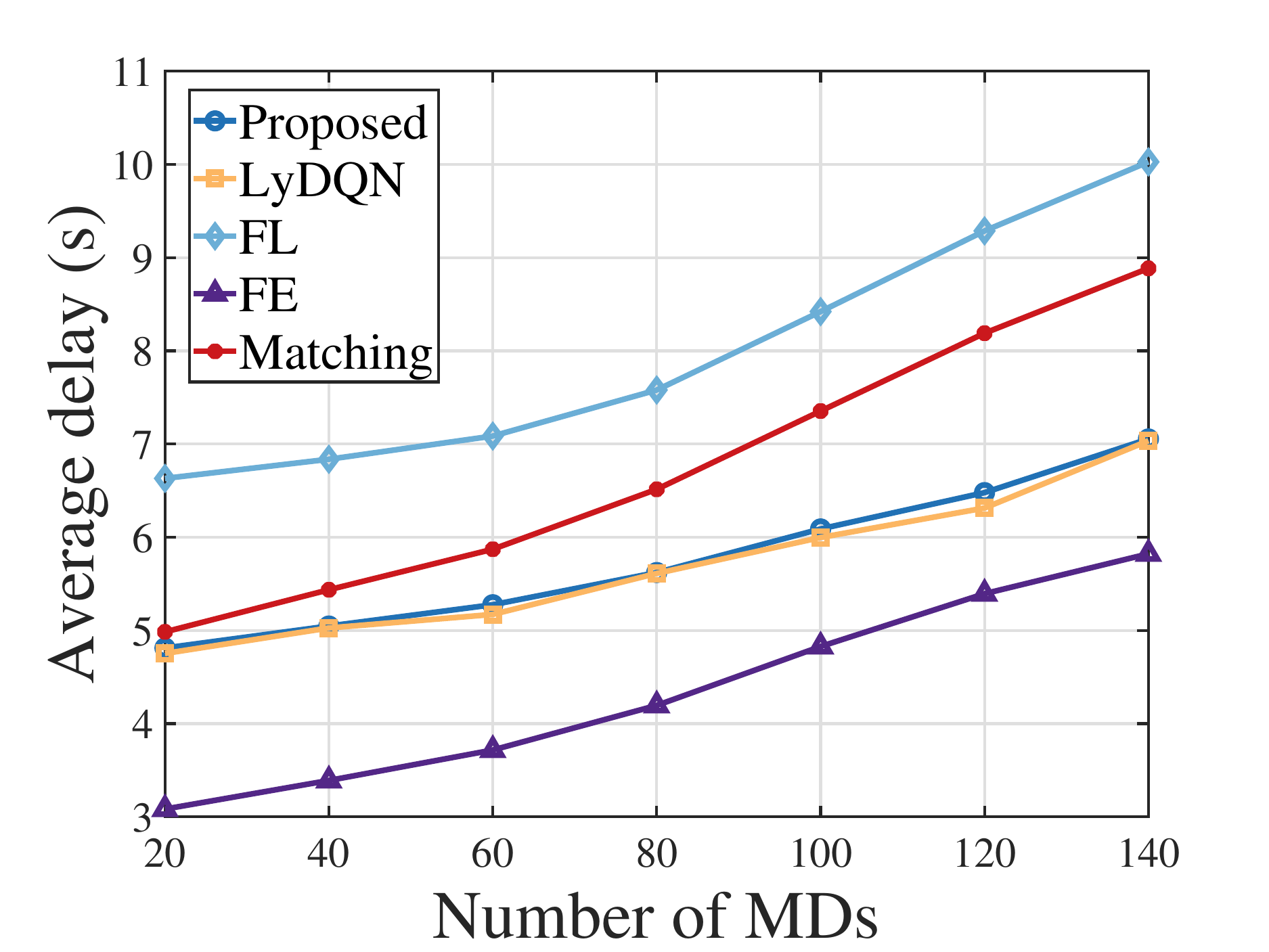}}
    \subfigure[] {\includegraphics[width=1.72in,angle=0]{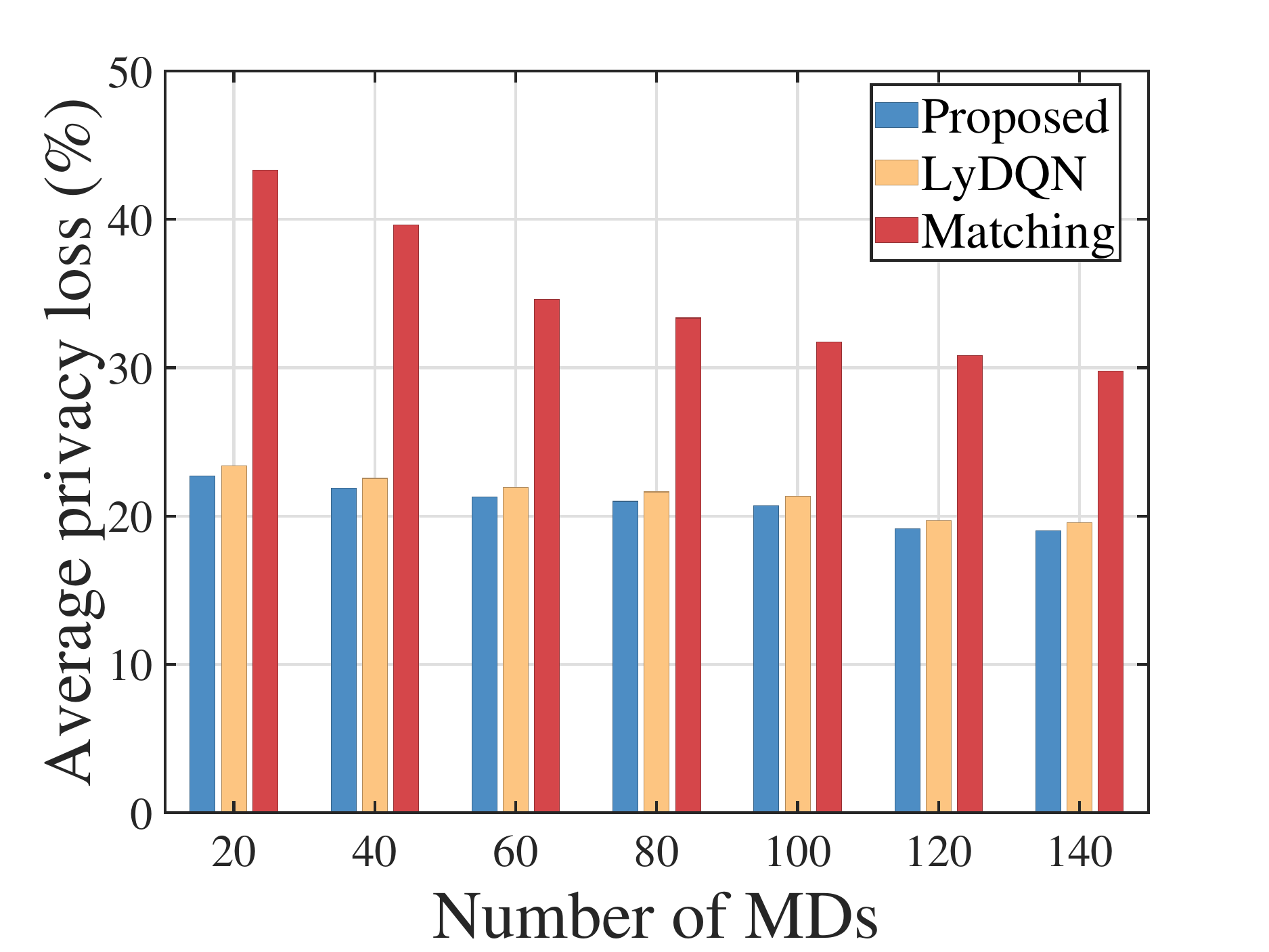}}
   \caption{Average delay and privacy loss versus the number of MDs.}
    \label{fig6}
\end{figure}
Fig. \ref{fig6} presents the average delay and privacy loss as the number of MDs increases, with the number of servers fixed at 10. As the number of MDs grows, the average delay increases across all algorithms. This is expected since more MDs compete for limited server resources and wireless bandwidth, leading to higher edge execution and transmission delays. In contrast, the average privacy loss decreases with more MDs, indicating that a larger proportion of tasks are executed locally, as offloading becomes less favorable under resource constraints. The proposed algorithm achieves a good balance between delay and privacy loss. While its delay is slightly higher than LyDQN, it is consistently lower than Matching and FL. In terms of privacy loss, both the proposed and LyDQN algorithms significantly outperform the Matching algorithms. The Matching algorithm exhibits higher privacy loss compared to both the proposed and LyDQN algorithms. This is because the Matching approach focuses primarily on minimizing inference delay by greedily assigning MDs to edge servers and models without explicitly considering long-term privacy constraints. As a result, more user data is likely offloaded to edge servers, increasing the risk of privacy leakage. In contrast, the proposed and LyDQN methods jointly optimize delay and privacy, ensuring that more computation is performed locally when needed to satisfy privacy requirements, thereby reducing overall privacy loss. Notably, although LyDQN yields comparable results to the proposed approach, its DQN model requires much longer training time and lacks the flexibility to quickly adapt to changing user and server scales. In contrast, the proposed method can adapt online, making it more practical for dynamic environments. FL and FE are not included in the privacy loss plot, since their privacy losses are always 0\% and 100\%, respectively.

\begin{figure}[t]
    \centering
    \subfigure[] {\includegraphics[width=1.72in,angle=0]{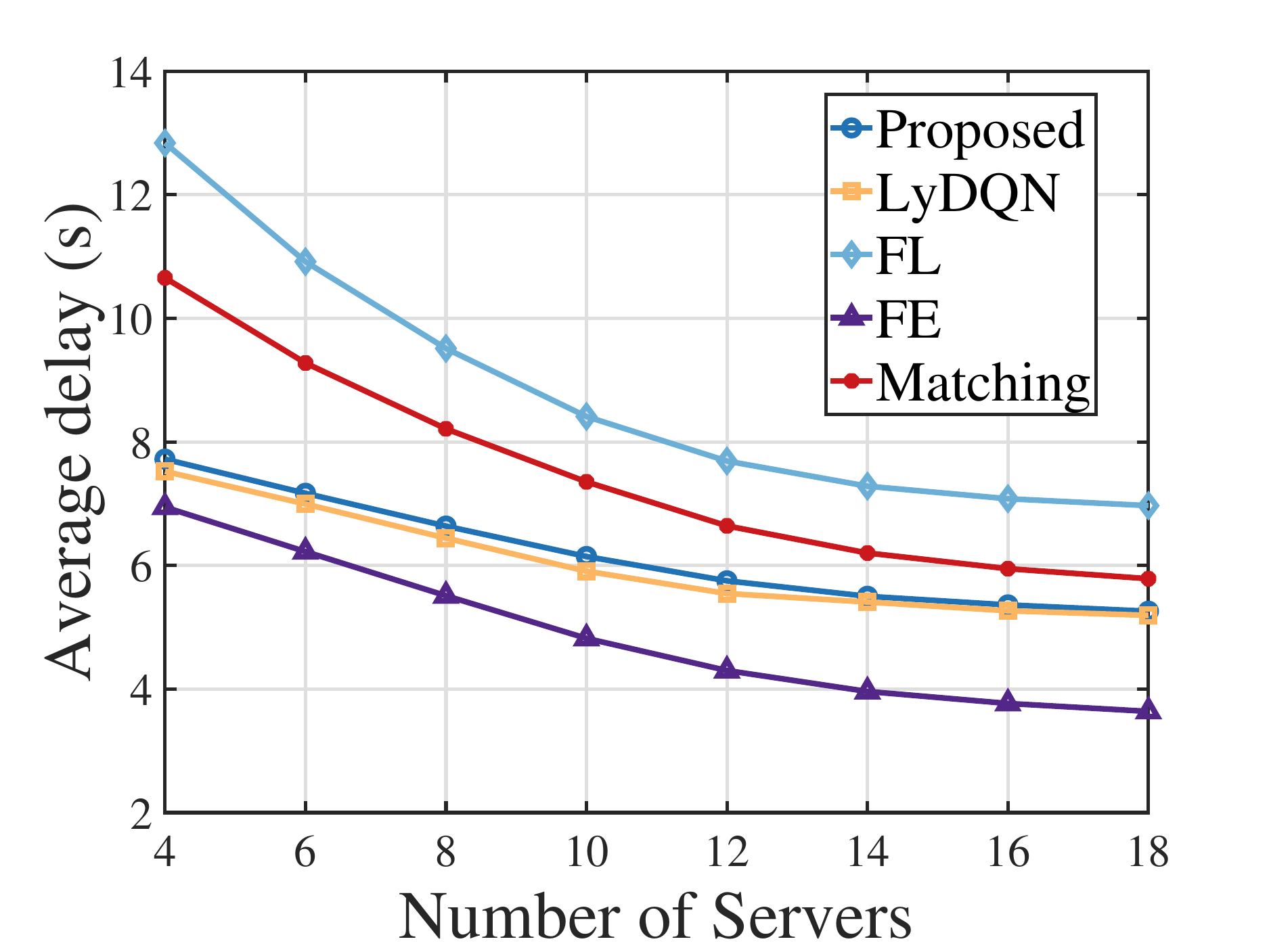}}
    \subfigure[] {\includegraphics[width=1.72in,angle=0]{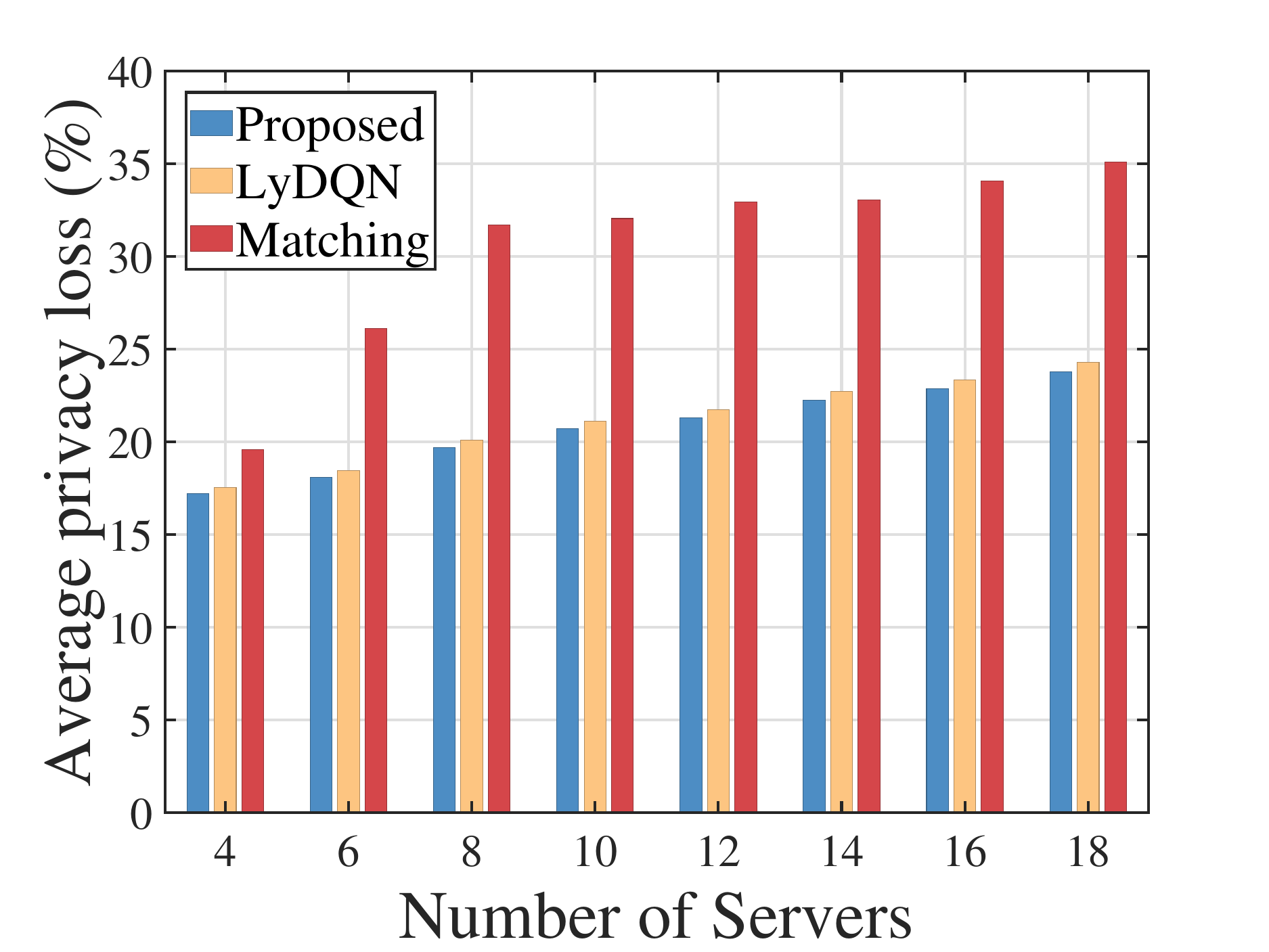}}
   \caption{Average delay and privacy loss versus the number of servers.}
    \label{fig7}
\end{figure}
\begin{figure}[t]
    \centering
    \subfigure[] {\includegraphics[width=1.72in,angle=0]{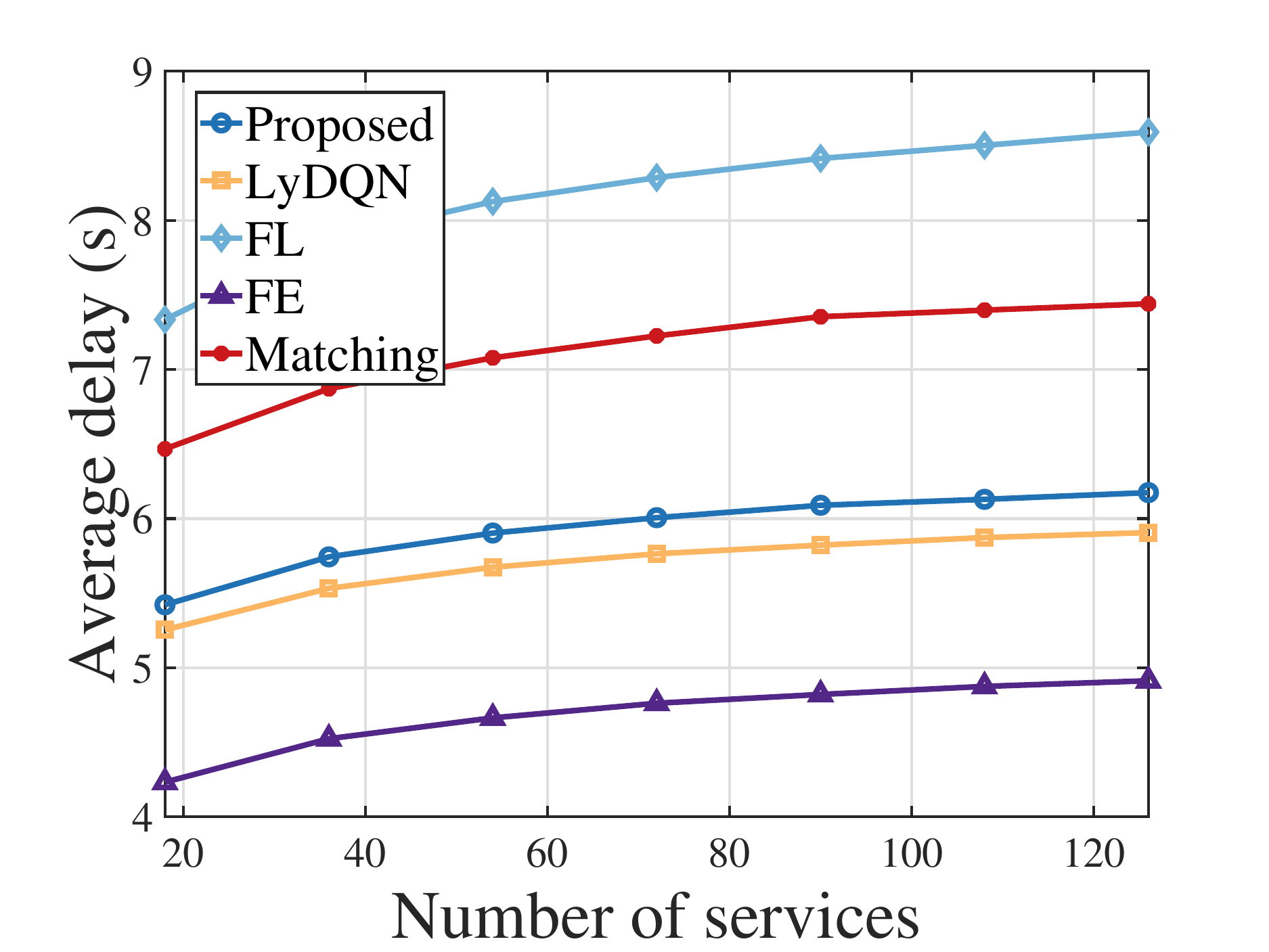}}
    \subfigure[] {\includegraphics[width=1.72in,angle=0]{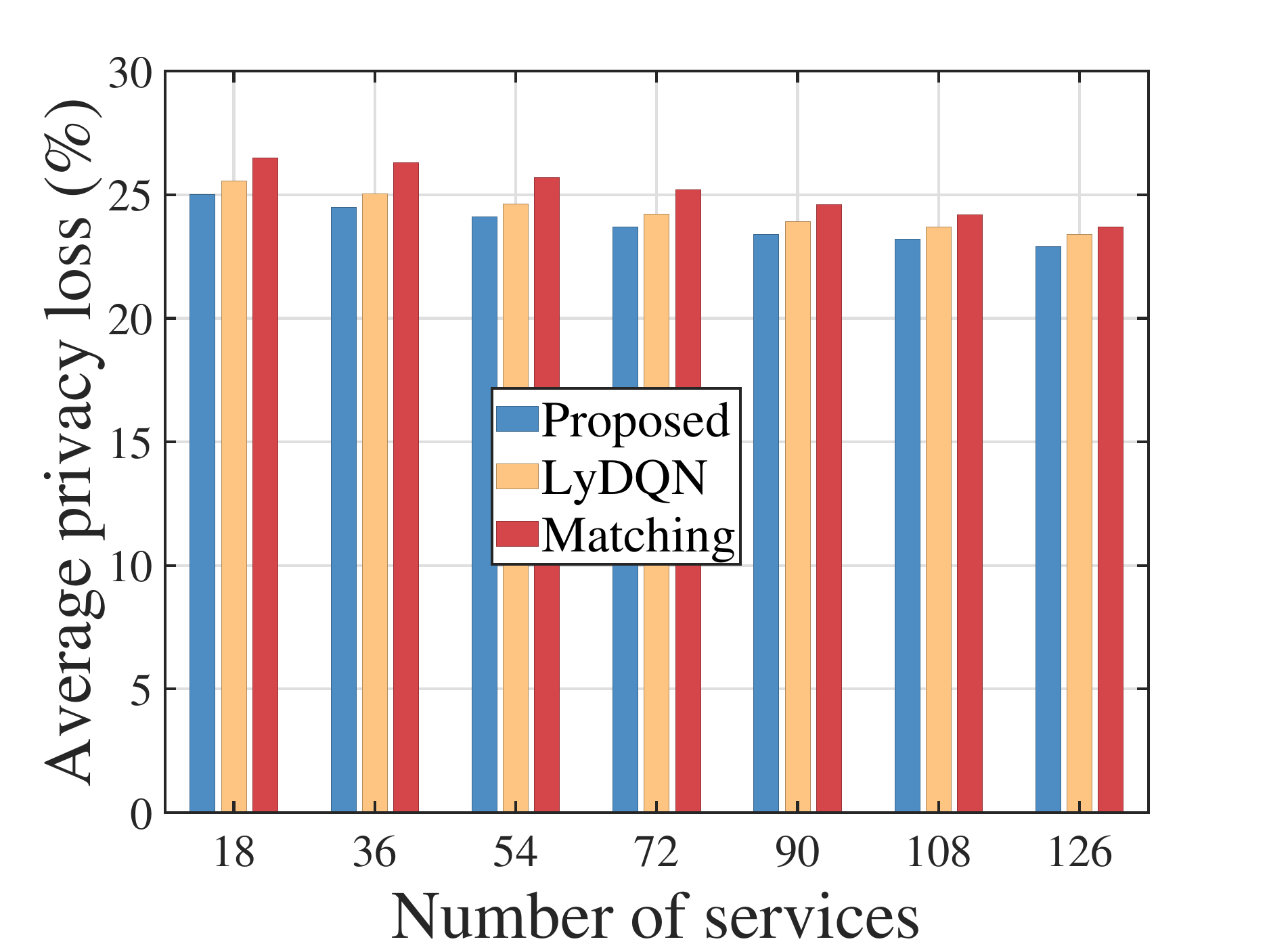}}
   \caption{Average delay and privacy loss versus the number of Inference services.}
    \label{fig8}
\end{figure}

Fig. \ref{fig7} shows that as the number of servers increases, the average delay decreases for all algorithms because more servers mean less resource contention among users, allowing more models to be offloaded to the edge and thus reducing delay. For example, the Proposed algorithm achieves about 28\% lower delay than FL and 18\% lower than Matching at 10 servers. However, this also leads to slightly higher privacy loss, as more model data is offloaded. Still, the Proposed algorithm keeps the privacy loss below 24\%, compared to over 34\% for Matching when there are 18 servers. Overall, the Proposed method achieves a better balance between delay and privacy as server resources increase.

Fig. \ref{fig8} shows that as the number of inference services increases, the average delay for all algorithms rises gradually, since more available services make it harder for users to be matched with their preferred models at edge servers. The Proposed and LyDQN algorithms consistently outperform Matching and FL in terms of lower delay. For example, with 126 services, the Proposed method reduces average delay by about 27\% compared to FL and by 15\% compared to Matching. On the other hand, the average privacy loss for all three algorithms slightly decreases as the number of services grows, as more MDs tend to process tasks locally. This result highlights the superior balance of the Proposed algorithm in managing both delay and privacy as service diversity increases.
\begin{figure}[t]
    \centering
    \subfigure[] {\includegraphics[width=1.72in,angle=0]{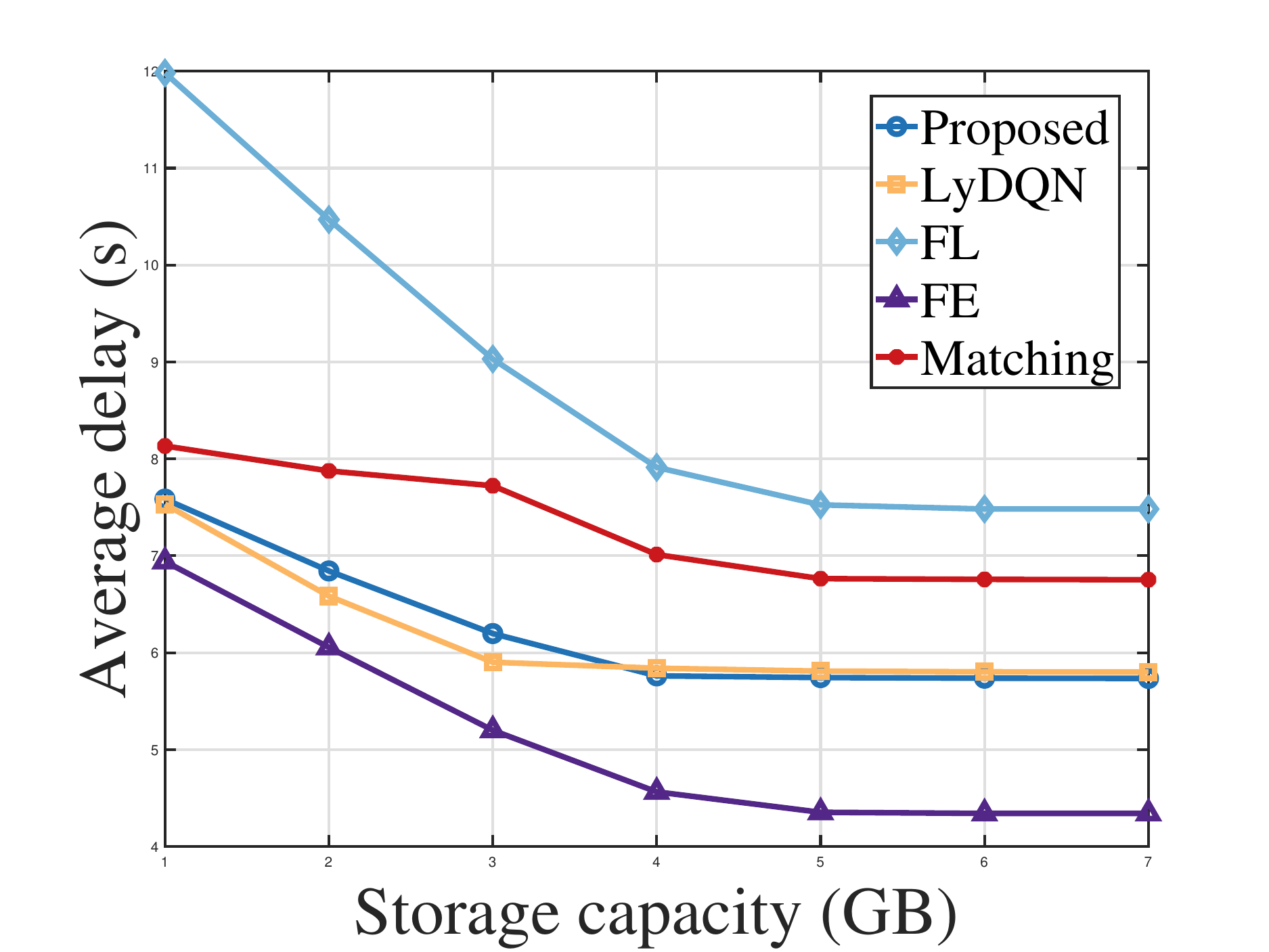}}
    \subfigure[] {\includegraphics[width=1.72in,angle=0]{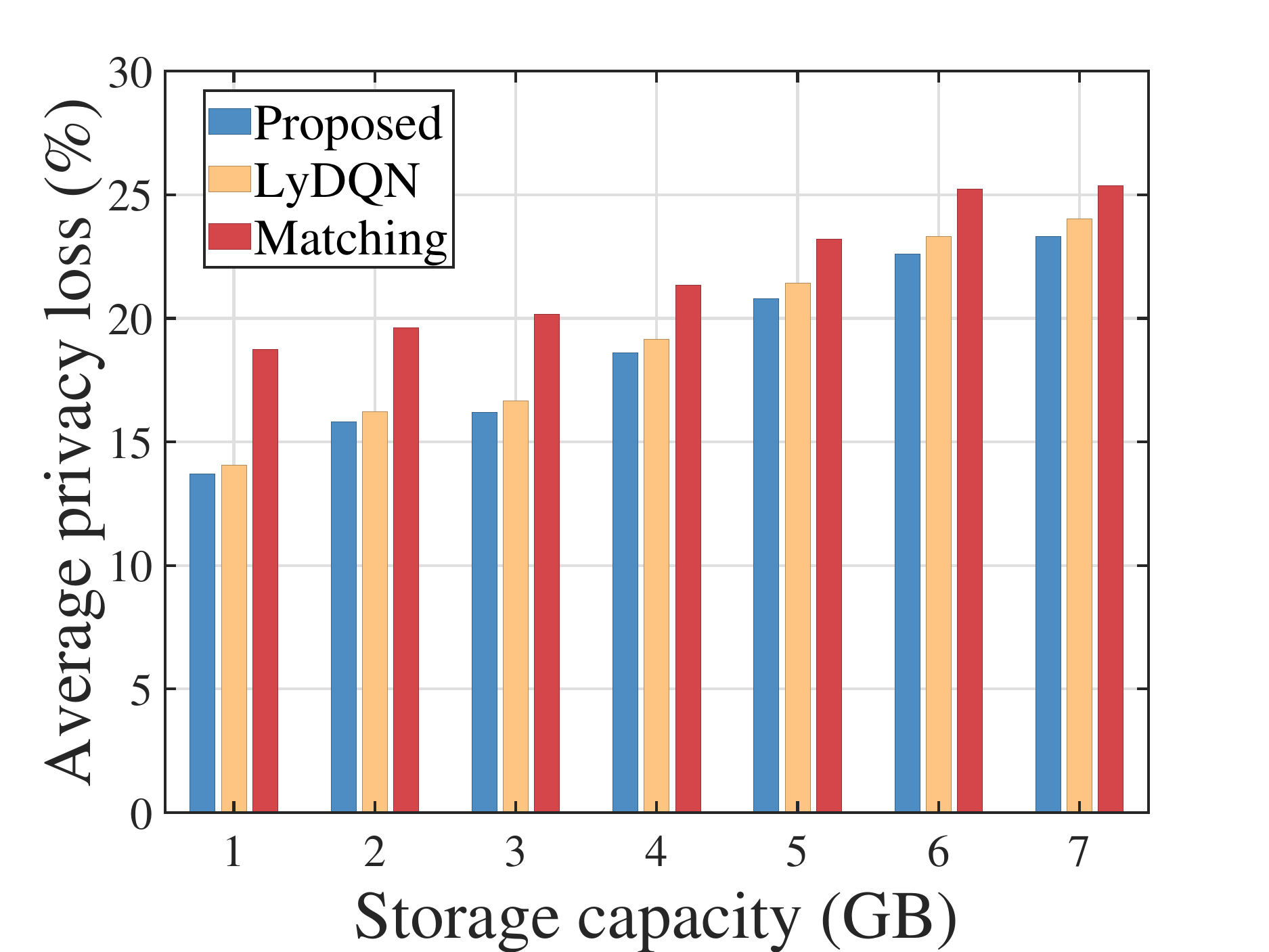}}
   \caption{Average delay and privacy loss versus the number of server storage capacity.}
    \label{fig9}
\end{figure}

Fig. \ref{fig9} demonstrates the impact of increasing storage capacity on both average delay and privacy loss. As storage capacity at each edge server increases, average delay significantly decreases for all algorithms, since more models can be cached and offloaded, transmission delays are reduced. For example, with a capacity of 7 GB, the Proposed algorithm reduces delay by approximately 40\% compared to FL, and by 22\% compared to Matching. However, the average privacy loss rises with increasing storage, as more models are processed at the edge instead of locally, leading to higher data exposure. Notably, the Proposed and LyDQN algorithms achieve a better trade-off than Matching, with lower delay and consistently less privacy loss for the same storage. This again shows the flexibility of the Proposed method in balancing delay and privacy as system resources scale.

Fig. \ref{fig10} presents the impact of varying privacy constraints on both average delay and privacy loss with 60 MDs and 10 servers. As the privacy constraint increases from 20\% to 90\%, the average delay of the proposed algorithm decreases modestly from approximately 4.5 to 3.8 s, representing a reduction of about 15\%. Meanwhile, the average privacy loss increases significantly from around 15\% to 33\%. This result demonstrates that, although relaxing the privacy constraint allows for more model partitioning and potential offloading, the improvement in delay is relatively limited. This is because the optimal split point selection is also influenced by server resources and network bandwidth, rather than privacy requirements alone.
\begin{figure}[t]
    \centering
    \subfigure[] {\includegraphics[width=1.72in,angle=0]{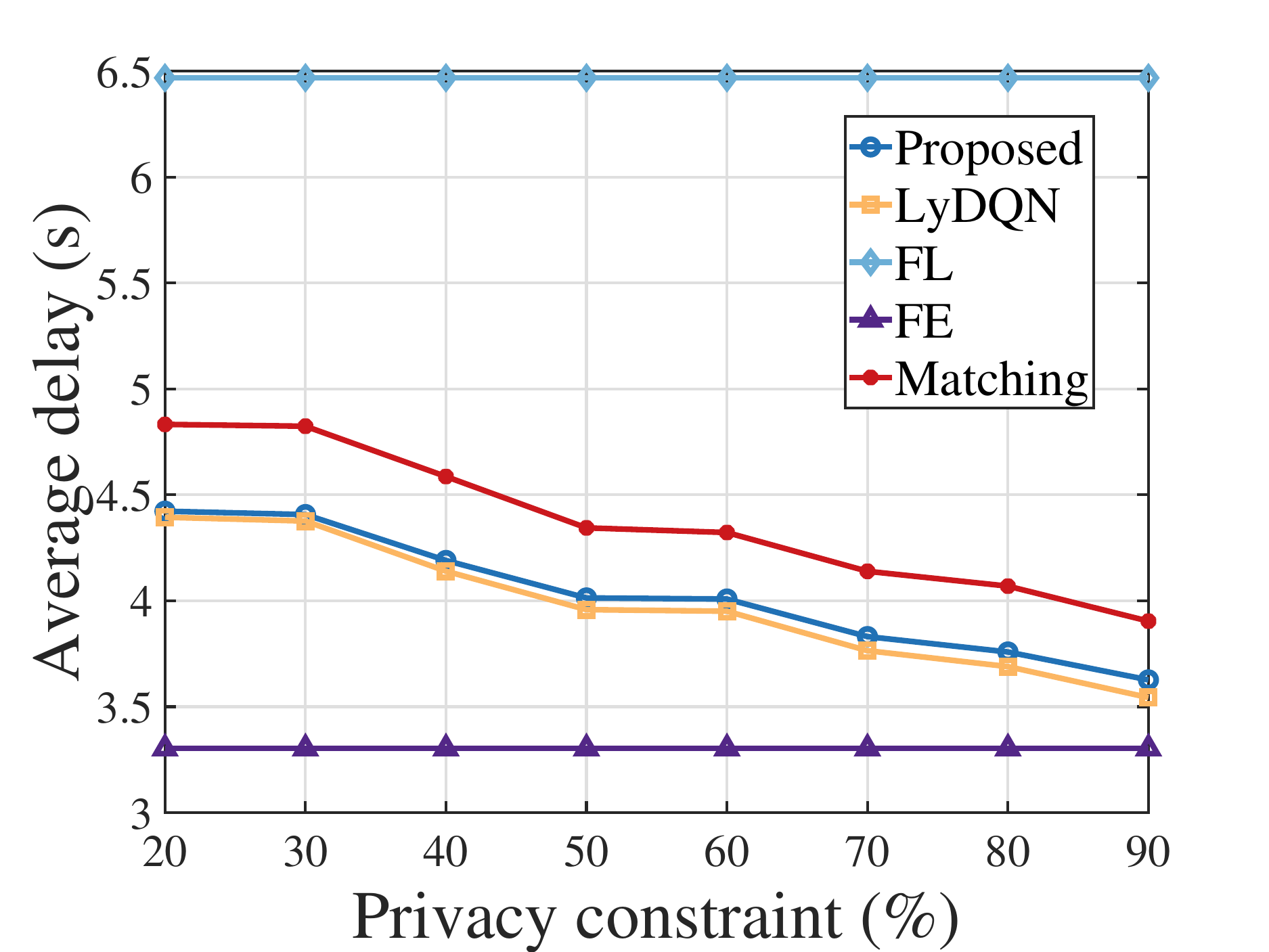}}
    \subfigure[] {\includegraphics[width=1.72in,angle=0]{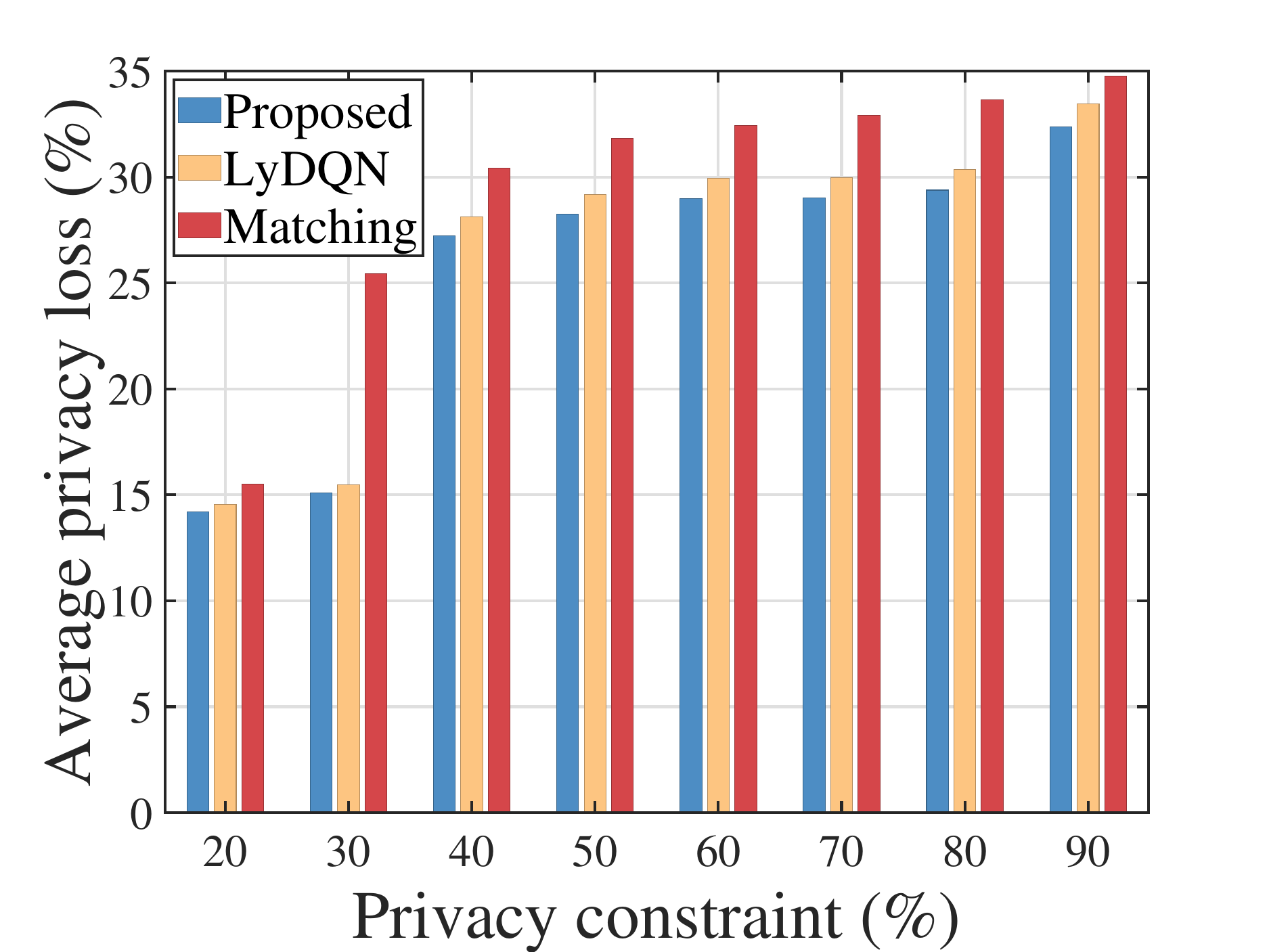}}
   \caption{Average delay and privacy loss versus the privacy constraint.}
    \label{fig10}
\end{figure}
\section{Conclusion}
In this paper, we addressed the critical challenges of delay-efficient and privacy-preserving collaborative EI by proposing a novel framework for joint DNN model deployment and partition optimization. Our approach leverages a Lyapunov-based optimization technique to handle dynamic and stochastic inference requests, ensuring long-term system stability and privacy constraints. By formulating the user-server association problem as a coalition formation game, we decoupled the complex joint optimization into manageable subproblems, enabling efficient suboptimal solutions through iterative optimization. The proposed greedy-based algorithm for model deployment and exhaustive search method for model partitioning further enhanced the system's performance. Extensive simulations demonstrated that our framework significantly reduces inference delay while maintaining stringent privacy constraints, outperforming existing baseline algorithms in various scenarios. The results highlight the importance of adaptive resource allocation, dynamic model partition, and privacy-aware optimization in edge computing environments. Future work could explore the integration of advanced machine learning techniques for real-time decision-making and further optimization of the trade-offs between computational efficiency, resource utilization, and data privacy in large-scale edge networks.

\end{document}